\documentclass[11pt,a4paper]{article}

\usepackage[T1]{fontenc}
\usepackage[utf8]{inputenc}
\usepackage[a4paper,margin=2.6cm]{geometry}
\usepackage{amsmath,amssymb,amsthm}
\usepackage{graphicx}
\usepackage{booktabs}
\usepackage{enumitem}
\usepackage{fancyhdr}
\usepackage[colorlinks=true,linkcolor=black,urlcolor=black,citecolor=black]{hyperref}

\newtheorem{theorem}{Theorem}
\newtheorem{proposition}{Proposition}
\newtheorem{lemma}{Lemma}
\newtheorem{corollary}{Corollary}
\theoremstyle{remark}
\newtheorem{remark}{Remark}

\title{\Large\bfseries The Price of Self-Calibration:\\
Exact Evidence Budgets and Manufactured Blind Sets\\ in Adaptive Monitoring}
\author{Abdou-Raouf Atarmla\\[2pt]
\small Institut National des Postes et T\'el\'ecommunications, Rabat, Morocco\\
\small Togo AI Lab, Lom\'e, Togo\\
\small\texttt{atarmla.abdouraouf@ine.inpt.ac.ma}\\
\small\texttt{achilleatarmla@gmail.com}}
\date{\small August 2026}

\begin{document}
\maketitle

\begin{abstract}
Self-calibrating monitors are the standard answer to fault detection without
ground truth: by continuously adapting their alarm threshold, they guarantee a
prescribed long-run false-alarm rate under arbitrary distribution drift. We
study the price of this guarantee, and we state every law together with its
exact domain of validity. First, the guarantee is an accounting identity,
insensitive to what the monitor is supposed to detect. Two evidence identities
make the cost exact for the online quantile tracker: a persistent step of
height $\delta$ yields a total excess alarm mass within one alarm of
$\delta/\eta$, where $\eta$ is the adaptation gain, and \emph{exactly}
$\delta/\eta$, pathwise, whenever $\delta$ is a lattice multiple of $\eta$;
a ramp of slope $c$ yields a stationary excess rate of exactly $c/\eta$,
independent of accumulated size, up to a boundary $c=\eta(1-\alpha)$ that
coincides exactly with the alarm-rate cap. Second, the fluctuation of the
certificate itself obeys an exact law: the alarm rate over a window of length
$L$ has standard deviation of order $1/L$, not the binomial $1/\sqrt L$,
because the windowed mass telescopes to a difference of the tight internal
state; the closed-form constant is validated to within $3\%$ with no fitted
parameter. Consequently downstream detectors calibrated on the binomial scale
are miscalibrated by the factor $\sqrt{\eta\varphi(q_0)L}$, conservative and
blind in the same proportion; correct calibration shortens detection windows
from quadratic to linear in the inverse fault speed. Third, any monitor
required to tolerate a drift class $\mathcal D$ is blind, at any horizon and
for any decision rule, to every fault in the difference set
$\mathcal D-\mathcal D$; the proof is a deliberately elementary two-point
argument, and the contribution is the object it identifies: for speed-bounded
classes the blind set is exactly the doubled-speed class, and the tracker
absorbs a speed class fixed by its own gain, so that under a certification
regime that declares absorbed drift normal, the monitor manufactures
$\mathcal D$. An exact Gaussian projection bound, sharper
than Pinsker and never vacuous, quantifies power outside the blind set.
Experiments verify each identity to its stated precision, including
per-trajectory exactness for lattice steps, the saturation boundary, the
fluctuation law across twelve configurations, and the speed frontier with
confidence intervals. The results delimit what a clean certificate of a
self-calibrating monitor does and does not certify.
\end{abstract}

\section{Introduction}

Modern monitoring pipelines increasingly operate without ground truth. A
deployed component produces a stream of scores, residuals or nonconformity
measures; a monitor watches the stream and raises alarms; no oracle ever
reveals which alarms were right. In this regime the only guarantee that can be
enforced online is a false-alarm guarantee, and the standard mechanism is
self-calibration: the monitor adapts its threshold so that the empirical alarm
rate tracks a prescribed level $\alpha$. Adaptive conformal inference
\cite{gibbs2021} is the best-known modern instance; self-starting control
charts and sliding-window thresholds are older members of the same family.
These mechanisms work as advertised: the false-alarm rate is controlled under
arbitrary, even adversarial, distribution shift.

This paper is about what the guarantee costs. The mechanism that keeps the
false-alarm rate pinned at $\alpha$ is the same mechanism that absorbs slow
change, whatever its cause. A monitor that must tolerate benign drift will
tolerate a fault that resembles benign drift, and its certificate, the
empirical alarm rate, will remain exactly on target while the fault passes.
Recent empirical work exhibits exactly this failure mode: in
world-model-based self-monitoring of reinforcement learning agents under
continuous observation drift, a sharp threshold separates drifts absorbed as
normal variation from drifts that trigger detection, and fragile systems
collapse before any detector fires \cite{gradual}. Our aim is to characterize the
phenomenon: to replace the observation that adaptive monitors can be fooled
by slow faults with exact statements about how much evidence a fault can ever
generate, at what rate, with what fluctuation around it, and which faults can
generate none at all, for any detector.

Our contributions are the following.

\begin{enumerate}[leftmargin=1.6em, itemsep=0.15em]
\item \textbf{The guarantee as accounting.} For the online quantile tracker,
the false-alarm guarantee is a telescoping identity on the internal state,
pathwise and insensitive to any alternative (Proposition~\ref{prop:validity}).
\item \textbf{Exact evidence identities, with exact boundaries.} A persistent
step of height $\delta$ produces excess alarm mass within one alarm of
$\delta/\eta$ in general, and exactly $\delta/\eta$ on every trajectory when
$\delta\in\eta\mathbb Z$, after an exact-coupling time of bounded expectation
(Theorem~\ref{thm:budget}); a ramp of slope $c$ produces a stationary excess
rate of exactly $c/\eta$ for $c<\eta(1-\alpha)$, and the condition is
necessary and sufficient: the identity value reaches the alarm cap exactly at
the boundary, beyond which the rate is $1$
(Theorem~\ref{thm:ramp}, Proposition~\ref{prop:sat}).
\item \textbf{An exact fluctuation law for the certificate.} The windowed
alarm rate telescopes to a difference of the tight internal state, so its
null standard deviation scales as $1/L$ and not the binomial $1/\sqrt L$;
the constant is closed-form and validated at $0.2$--$2.7\%$ over twelve
configurations (Theorem~\ref{thm:fluct}). Binomial calibration of downstream
detectors is therefore miscalibrated by exactly $\sqrt{\eta\varphi(q_0)L}$,
and correct calibration shortens the detection window of a slope-$c$ drift
from $\Theta((\eta/c)^2)$ to $\Theta(\eta/c)$ (Corollary~\ref{cor:window}),
a factor confirmed head-to-head in experiment.
\item \textbf{The manufactured blind set.} Any monitor with uniform size
$\alpha$ over a drift class $\mathcal D$ has power at most $\alpha$ against
every fault in $\mathcal D-\mathcal D$, for any decision rule and horizon
(Theorem~\ref{thm:confusion}); the proof is an elementary two-point
identification, stated as such, and the contribution is the object: for
speed-bounded classes the blind set is exactly the doubled-speed class
(Corollary~\ref{cor:speed}), and an exact Gaussian projection bound, never
vacuous, replaces Pinsker outside it (Proposition~\ref{prop:lecam}).
\item \textbf{The self-manufacturing link.} The tracker's own gain determines
the drift class it effectively tolerates, so the blind set is indexed by a
design parameter rather than by any physical uncertainty model
(Proposition~\ref{prop:manufacture}); tuning the gain trades the step budget
against the width of the blind set, and no tuning removes both.
\end{enumerate}

Section~\ref{sec:setting} fixes the setting. Section~\ref{sec:identities}
proves the accounting, evidence and fluctuation results.
Section~\ref{sec:blind} develops the blind set. Section~\ref{sec:exp} reports
the experiments, each against its stated precision.
Sections~\ref{sec:related} and~\ref{sec:discussion} position and discuss.

\section{Setting}\label{sec:setting}

A monitor observes a real-valued score stream $(s_t)_{t \ge 1}$ and maintains
a threshold $q_t$. At each step it emits an alarm
$e_t = \mathbf{1}\{s_t > q_t\}$ and updates
\begin{equation}\label{eq:tracker}
q_{t+1} \;=\; q_t + \eta\,(e_t - \alpha),
\end{equation}
with adaptation gain $\eta > 0$ and target level $\alpha \in (0,1)$. An alarm
raises the threshold by $\eta(1-\alpha)$; a quiet step lowers it by
$\eta\alpha$. Update~\eqref{eq:tracker} is stochastic gradient descent on the
pinball loss of quantile regression \cite{koenker1978} and is the
direct-threshold analogue of the adaptive conformal
update of \cite{gibbs2021}, which applies the same recursion to a confidence
level; the identities below hold for that variant to first order in $\eta$
(Remark~\ref{rem:scope}). We write the cumulative excess alarm mass over a window as
$E_{[t_1, t_2]} = \sum_{t = t_1}^{t_2} (e_t - \alpha)$.

Faults and drifts act additively on the score stream:
$s_t = \xi_t + \mu_t + \theta_t$, where $(\xi_t)$ is a stationary noise
sequence with distribution function $F$, $\mu$ is a nuisance drift the
monitor is required to tolerate, and $\theta$ is the fault. All magnitudes
are in score units. When the noise is standard Gaussian we write
$q_0=\Phi^{-1}(1-\alpha)$ and $\varphi(q_0)$ for the density at the tracked
quantile.

\section{Accounting, evidence and fluctuation}\label{sec:identities}

\begin{lemma}[telescoping]\label{lem:tel}
For any score sequence and any $t_1 \le t_2$,
$E_{[t_1,t_2]} = (q_{t_2+1} - q_{t_1})/\eta$.
\end{lemma}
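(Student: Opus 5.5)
The identity follows by summing the update rule~\eqref{eq:tracker} over the window, so the argument is a single telescoping sum and holds pathwise. First, rearrange~\eqref{eq:tracker} to express each summand of $E_{[t_1,t_2]}$ as a threshold increment:
\[
e_t - \alpha \;=\; \frac{q_{t+1}-q_t}{\eta}, \qquad t\ge 1 .
\]
This is legitimate because $\eta>0$. Nothing about the score stream is used: whatever $s_t$ is (random, adversarial, or containing drift $\mu$ and fault $\theta$), $e_t$ is whatever indicator it is, and the recursion links it deterministically to $q_t$ and $q_{t+1}$.

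Second, sum this relation over $t=t_1,\dots,t_2$:
\[
E_{[t_1,t_2]} \;=\; \frac1\eta\sum_{t=t_1}^{t_2}(q_{t+1}-q_t) \;=\; \frac{q_{t_2+1}-q_{t_1}}{\eta}.
\]
The intermediate thresholds $q_{t_1+1},\dots,q_{t_2}$ each appear once with a plus sign and once with a minus sign, so they cancel. If one wants the cancellation stated formally, induct on $t_2$. The base case $t_2=t_1$ is exactly the rearranged update. The inductive step adds $(q_{t_2+2}-q_{t_2+1})/\eta$ to both sides.

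There is no real obstacle here; the only care needed is index bookkeeping. The window sum is inclusive of $t_2$, and the alarm $e_{t_2}$ affects $q_{t_2+1}$. That is why the right-hand side involves $q_{t_2+1}$ and not $q_{t_2}$.
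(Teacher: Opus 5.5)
Your proof is correct and is the same argument as the paper's, which simply sums the update~\eqref{eq:tracker} over the window. Rewriting each summand as $e_t-\alpha=(q_{t+1}-q_t)/\eta$ and telescoping is exactly that, and your index bookkeeping ending at $q_{t_2+1}$ is right.
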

\begin{proof}
Sum~\eqref{eq:tracker} over the window.
\end{proof}

\begin{proposition}[validity is accounting; cf.\ \cite{gibbs2021}, Prop.~4.1]\label{prop:validity}
For any, possibly adversarial, score sequence such that the threshold remains
in a bounded set of diameter $D_q$,
$\big|\tfrac1T\sum_{t=1}^{T} e_t - \alpha\big| \le D_q/(\eta T)$.
This deterministic bound is the false-alarm guarantee of adaptive conformal
inference \cite{gibbs2021}, restated for the direct-threshold tracker; we
record it because every result below is drawn from the same telescoping
identity, read pathwise rather than as a bound.
In particular, whenever the fault leaves the threshold bounded, as every
fault of bounded cumulative height does, the long-run alarm rate equals
$\alpha$ regardless of the fault's presence, size or shape; unbounded drifts
move the threshold, not the identity, and their exact rates are computed in
Theorem~\ref{thm:ramp}.
\end{proposition}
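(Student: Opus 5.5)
The plan is to apply Lemma~\ref{lem:tel} on the full window $[1,T]$ and then read off the bound. Taking $t_1=1$ and $t_2=T$, the lemma gives the pathwise identity
\[
\frac1T\sum_{t=1}^{T} e_t - \alpha \;=\; \frac{1}{T}E_{[1,T]} \;=\; \frac{q_{T+1}-q_1}{\eta T}.
\]
This holds for any score sequence, adversarial or not. The reason is that the update~\eqref{eq:tracker} involves the scores only through the indicators $e_t$, and the telescoping sum does not depend on how those indicators were produced.

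Next I would use the hypothesis on the threshold. Both $q_1$ and $q_{T+1}$ lie in a set of diameter $D_q$, so $|q_{T+1}-q_1|\le D_q$, and dividing by $\eta T$ gives the stated bound. The only point to check is that the bounded set contains the whole trajectory, including the endpoint $q_{T+1}$. I read the hypothesis as covering all indices; otherwise one can replace $D_q$ by the diameter of $\{q_1,\dots,q_{T+1}\}$ without changing the argument.

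For the ``in particular'' clause, I would show that a fault of bounded cumulative height keeps the threshold bounded. This is the one step that is not pure bookkeeping, and I expect it to be the main obstacle. I would prove it with a drift argument on $q_t$. Suppose $q_t$ exceeds $\sup_s(\mu_s+\theta_s)$ by more than a noise margin $M$. Then the alarm probability is at most $\bar F(M)<\alpha$, so the expected increment $\eta(\Pr(e_t=1)-\alpha)$ is negative. Symmetrically, when $q_t$ lies far below the shifted signal, the alarm probability exceeds $\alpha$ and the drift is positive. In both cases each step changes $q_t$ by at most $\eta\max(\alpha,1-\alpha)$.

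For adversarial deterministic sequences the statement can be made cleaner when the noise is bounded, say $\xi_t\in[a,b]$. Then $e_t=0$ forces a decrease and $e_t=1$ forces an increase whenever $q_t$ leaves the band $[a+\inf(\mu+\theta),\,b+\sup(\mu+\theta)]$. Hence $q_t$ stays within $\eta$ of that band, and one can take $D_q\le b-a+\operatorname{osc}(\mu+\theta)+2\eta$. For unbounded noise I would state boundedness only in the tightness sense: $q_t-(\mu_t+\theta_t)$ is tight. Then $(q_{T+1}-q_1)/T\to0$ in probability, and the long-run rate is still $\alpha$.

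Finally, for unbounded drifts the same identity gives $\frac1T\sum_{t=1}^{T} e_t-\alpha=(q_{T+1}-q_1)/(\eta T)$ with $q_{T+1}$ growing linearly. This is deferred to Theorem~\ref{thm:ramp}, as the statement indicates.
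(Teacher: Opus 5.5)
Your proof is correct and is the paper's own argument: the bound is Lemma~\ref{lem:tel} on $[1,T]$ followed by $|q_{T+1}-q_1|\le D_q$, which the paper leaves implicit. Your extra treatment of the parenthetical claim goes beyond the paper, which only asserts it, and you rightly note that with unbounded noise the threshold is not pathwise bounded, so the claim must be read via tightness; two small caveats are that your drift step tacitly needs $\xi_t$ independent of the past (i.i.d.\ noise rather than merely stationary), and that the uniform geometric tails from Hajek's lemma would upgrade your in-probability conclusion to almost sure, as in Appendix~\ref{app:ramp}.
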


The guarantee is thus a property of the update rule, not of the world: it
certifies that the bookkeeping closes, not that anything was watched. The
next results make the cost exact. We compare a baseline world and a faulted
world coupled on the same noise, and measure the excess alarm mass the fault
adds.

\begin{lemma}[pathwise coupling]\label{lem:coupling}
Let two trackers obey~\eqref{eq:tracker} on the same score stream
$(\xi_t)_{t\ge t_0}$ with initial thresholds $u_{t_0}\le v_{t_0}$, and set
$g_t=v_t-u_t$. Then:
\emph{(i)} $g_t\in g_{t_0}+\eta\mathbb Z$ for all $t$;
\emph{(ii)} $g$ is nonincreasing while $g_t\ge\eta$, decreasing by exactly
$\eta$ at every step with $\xi_t\in(u_t,v_t]$, and from the first time
$T_1$ with $g_{T_1}<\eta$ onward, $|g_t|<\eta$ forever;
\emph{(iii)} if moreover $g_{t_0}\in\eta\mathbb Z$, then $g_t=0$ for all
$t\ge T_1$: the trackers couple exactly and are identical thereafter;
\emph{(iv)} if the $\xi_t$ are i.i.d.\ with a density positive on an interval
containing the recurrent range of the pair, then $T_1<\infty$ almost surely
and $\mathbb E[T_1]\le C\,\lceil g_{t_0}/\eta\rceil$ for a constant $C$
depending only on the noise law, $\eta$ and $\alpha$
(Appendix~\ref{app:ramp}).
\end{lemma}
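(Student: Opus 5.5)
We begin with a one-step case analysis of the gap. Both trackers see the same score $\xi_t$, so $g_{t+1}-g_t=\eta\,(e^v_t-e^u_t)$ with $e^u_t=\mathbf 1\{\xi_t>u_t\}$ and $e^v_t=\mathbf 1\{\xi_t>v_t\}$. The $\alpha$ terms cancel, which gives (i) at once by induction. If $u_t\le v_t$, then $e^v_t\le e^u_t$, and the difference is $-1$ exactly when $\xi_t\in(u_t,v_t]$ and $0$ otherwise. Hence while $g_t\ge 0$ the gap either stays put or drops by exactly $\eta$. Symmetrically, if $g_t<0$ the gap either stays put or rises by $\eta$ (when $\xi_t\in(v_t,u_t]$). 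This proves the monotonicity and the exact-decrement parts of (ii).

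For the remainder of (ii), suppose $g_{T_1}<\eta$. If $g_{T_1}\ge 0$, a decrement leaves the gap in $[-\eta+g_{T_1},\,g_{T_1})\subset(-\eta,\eta)$, so it never moves again upward past itself. If $g_{T_1}<0$, an increment moves it to $g_{T_1}+\eta\in(0,\eta)$, and then possibly back down. The gap therefore moves within the two-point set $\{r,r-\eta\}$, where $r=g_{T_1}\bmod\eta\in[0,\eta)$. Both points lie in $(-\eta,\eta)$, so $|g_t|<\eta$ forever. Note that $g$ can never jump over $0$ by more than $\eta$, because each step changes it by at most $\eta$ and only toward zero.

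For (iii), if $g_{t_0}\in\eta\mathbb Z$ then by (i) $r=0$. So $g_{T_1}=0$, since it is a lattice point with $0\le g_{T_1}<\eta$: entering from $g\ge\eta$ by a single decrement of $\eta$ lands on a nonnegative lattice point below $\eta$. Once $g=0$ the thresholds coincide, so the alarms and updates coincide, and the trackers stay identical.

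For (iv), which is the main obstacle, I would argue as follows. While $g_t\ge\eta$, the probability of a decrement at step $t$ is $F(v_t)-F(u_t)\ge F(u_t+\eta)-F(u_t)$. This is bounded below by some $p>0$ provided $u_t$ stays in a compact region where the density is positive. The tracker $u$ is a positive-recurrent Markov chain. It drifts up by $\eta(1-\alpha)$ above the quantile and down by $\eta\alpha$ below, a Foster–Lyapunov drift with $V(u)=|u-q^*|$. So I would take a compact set $K$ with bounded expected return times, say bounded by $M$ uniformly from any starting point in the recurrent range. Each visit to $K$ gives a decrement with probability at least $p$. Each of the $\lceil g_{t_0}/\eta\rceil$ required decrements then costs at most a geometric number of excursions, giving an expected time of at most $M/p$. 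Summing, we get $\mathbb E[T_1]\le (M/p)\lceil g_{t_0}/\eta\rceil$, and $T_1<\infty$ almost surely follows. The delicate point is uniformity of $M$ in the starting state. Because the update is bounded and $u$ only moves by $\eta$-sized steps, the drift estimate gives a hitting time of $K$ linear in the distance. Within the recurrent range that distance is bounded, which is where the hypothesis on the recurrent range enters. I would defer these constants to the appendix.
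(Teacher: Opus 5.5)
Your proof is correct and follows the paper's route. Parts (i)--(iii) are the same one-step case analysis, which your identity $g_{t+1}-g_t=\eta(e^v_t-e^u_t)$ packages more cleanly; part (iv) is the appendix argument: Foster drift of the single tracker $u$ into a compact $K$, the bound $F(u_t+\eta)-F(u_t)\ge\beta_K>0$ on the shrink probability while $g_t\ge\eta$, and a geometric number of attempts per shrink, summed over at most $\lceil g_{t_0}/\eta\rceil$ shrinks.

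The one point to repair is where the uniformity in (iv) comes from. It does not come from a bounded recurrent range: for full-support noise, the setting of Theorem~\ref{thm:budget}, the tracker's range is unbounded, and by your own Foster estimate hitting times of $K$ grow linearly with distance. It comes instead from the fact that every attempt is made from $u_t\in K$, so the next excursion starts within $\eta$ of $K$, where the Foster bound $V(x)/\rho$ is uniform. To this you must add a separate initial-approach term $V(u_{t_0})/\rho$, which is bounded only because the starting thresholds lie in a fixed compact set, as the paper's appendix assumes.
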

\begin{proof}
If $\xi_t>\max(u_t,v_t)$ or $\xi_t\le\min(u_t,v_t)$, both trackers move
identically ($e_t$ agrees) and $g$ is unchanged. Otherwise $\xi_t$ separates
them, and exactly one alarms: whichever tracker has the \emph{smaller}
threshold sees $\xi_t$ exceed it and rises by $\eta(1-\alpha)$, while the
other, with the larger threshold, stays quiet and falls by $\eta\alpha$.
While $g_t=v_t-u_t\ge\eta$, the smaller threshold is $u_t$: $u$ rises,
$v$ falls, so $g$ decreases by exactly $\eta$; this is monotonicity, and
increments lie in $\{0,-\eta\}$ in this regime. The first crossing lands
$g_{T_1}\in(-\eta,\eta)$. The invariance of this interval is where the role
of ``smaller threshold'' matters: if $g_t\in(0,\eta)$, $u_t$ is still smaller
and the same shrink event decreases $g$ by $\eta$, landing in
$(-\eta,0)$; but if $g_t\in(-\eta,0)$, the ordering has flipped, $v_t$ is
now the smaller threshold, so $v$ (not $u$) rises and $u$ falls on a shrink
event, which \emph{increases} $g=v-u$ by $\eta$, landing back in $(0,\eta)$.
The two sub-cases are mirror images once the smaller threshold is correctly
identified, and together they show $(-\eta,\eta)$ is invariant under every
shrink event from either side, which is (i)--(ii). Under (iii) the lattice
(i) forces $g_{T_1}=0$, and two trackers with equal thresholds on the same
stream remain equal. Claim (iv) is the hitting-time bound proved in the
appendix: the gap is bounded and nonincreasing, so the pair lives in a
strip around the diagonal and joint returns reduce to returns of a single
tracker, which is positively recurrent by drift; while
$g_t\ge\eta$ the shrink probability is $F(v_t)-F(u_t)\ge F(u_t+\eta)-F(u_t)$,
bounded below on compacts, so
each of the at most $\lceil g_{t_0}/\eta\rceil$ shrinks occurs after a wait
of uniformly bounded expectation.
\end{proof}

\begin{theorem}[step budget, pathwise]\label{thm:budget}
Couple two trackers on the same i.i.d.\ noise with full-support density, one
observing $s_t=\xi_t$, the other $s^f_t=\xi_t+\delta\,\mathbf 1\{t\ge t_0\}$
with $\delta>0$ persistent, both with the same threshold at $t_0$, lying in
a fixed compact set. Then there
is an almost surely finite time $T^*$ with
$\mathbb E[T^*-t_0]\le C\lceil\delta/\eta\rceil$ such that for \emph{all}
$T\ge T^*$,
\[
\Big|\,E^{f}_{[t_0,T]}-E_{[t_0,T]}-\frac{\delta}{\eta}\,\Big|\;<\;1,
\]
and if $\delta\in\eta\mathbb Z$ the difference equals $\delta/\eta$
\emph{exactly, on every trajectory}, for all $T\ge T^*$. The budget does not
depend on the noise law, on $\alpha$, or on how long the fault persists, and
no moment condition is required: heavy tails change nothing.
\end{theorem}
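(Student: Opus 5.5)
The plan is to reduce the faulted tracker to a tracker on the unshifted stream with a shifted threshold, so that Lemma~\ref{lem:coupling} and Lemma~\ref{lem:tel} do all the work.

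\textbf{Step 1: change of variables.} For $t\ge t_0$ the faulted tracker alarms iff $\xi_t+\delta>q^f_t$, i.e.\ iff $\xi_t>q^f_t-\delta$. Set $v_t=q^f_t-\delta$. Then $v$ obeys~\eqref{eq:tracker} on the plain stream $(\xi_t)_{t\ge t_0}$ with initial value $v_{t_0}=q_{t_0}-\delta$, and its alarms coincide with $e^f_t$. Write $u_t=q_t$ for the baseline tracker.

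\textbf{Step 2: invoke the coupling lemma.} The two trackers $v\le u$ run on the same i.i.d.\ stream with initial gap $u_{t_0}-v_{t_0}=\delta>0$. This is Lemma~\ref{lem:coupling} with the roles of $u$ and $v$ relabelled. Define $T^*=T_1$, the first time the gap $g_t=u_t-v_t$ satisfies $|g_t|<\eta$. Part~(iv) gives $T^*<\infty$ almost surely and $\mathbb E[T^*-t_0]\le C\lceil\delta/\eta\rceil$. Part~(ii) gives $|g_t|<\eta$ for all $t\ge T^*$. Part~(iii) gives $g_t=0$ for all $t\ge T^*$ when $\delta\in\eta\mathbb Z$.

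\textbf{Step 3: telescope.} By Lemma~\ref{lem:tel} applied to each tracker,
\[
E^f_{[t_0,T]}-E_{[t_0,T]}=\frac{(v_{T+1}-v_{t_0})-(u_{T+1}-u_{t_0})}{\eta}=\frac{\delta-g_{T+1}}{\eta}.
\]
For $T\ge T^*$ we have $|g_{T+1}|<\eta$, which yields the bound $<1$. In the lattice case $g_{T+1}=0$, which yields exact equality on every trajectory. Nothing here depends on $\alpha$, on the noise law beyond the hypotheses of (iv), or on moments. The persistence length does not enter either, since the identity holds for every $T\ge T^*$.

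\textbf{Main obstacle.} The substantive step is checking that part~(iv) of Lemma~\ref{lem:coupling} applies uniformly. The constant $C$ must depend only on the noise law, $\eta$ and $\alpha$, which is where the compact-set hypothesis on the initial threshold enters. I would argue as follows. Full support gives a density positive everywhere, so the "recurrent range" hypothesis holds automatically. The pair stays in a strip of width $\delta$, so the shrink probability $F(u_t)-F(v_t)\ge F(v_t+\eta)-F(v_t)$ is bounded below whenever $v_t$ lies in a compact set. Positive recurrence of a single tracker, obtained from its drift toward the $(1-\alpha)$-quantile, controls the excursions outside that compact set. Combining these gives a geometric-trials bound for each of the at most $\lceil\delta/\eta\rceil$ shrinks. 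I would cite the appendix for this.

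The remaining claim, that heavy tails change nothing, holds because the argument never uses moments of $\xi$. Only $F$ enters, through the shrink probabilities.
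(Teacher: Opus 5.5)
Your proposal is correct and follows the paper's proof. The shift $q^f_t-\delta$ makes the faulted monitor a second tracker on the same noise, started $\delta$ below the baseline; Lemma~\ref{lem:coupling}(ii)--(iv) gives $|g_t|<\eta$ from $T_1$ on, with $g_t=0$ in the lattice case; and Lemma~\ref{lem:tel} converts this into $(\delta-g_{T+1})/\eta$. Your uniformity discussion matches the strip-plus-renewal argument of Appendix~\ref{app:ramp}, with one slip: full support does not make the density positive everywhere, but the bound you actually use, $\inf_{x\in K}\{F(x+\eta)-F(x)\}>0$ on compacts, does follow from full support and continuity of $F$.
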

\begin{proof}
By Lemma~\ref{lem:tel} applied to each world,
$E^{f}_{[t_0,T]}-E_{[t_0,T]}=(q^{f}_{T+1}-q_{T+1})/\eta$. The shifted process
$u_t=q^f_t-\delta$ obeys~\eqref{eq:tracker} on the same noise, since
$\mathbf 1\{\xi_t+\delta>q^f_t\}=\mathbf 1\{\xi_t>u_t\}$; thus $u$ and $q$
are two copies started $\delta$ apart on the same stream, and
Lemma~\ref{lem:coupling}(ii)--(iv) give $|q^f_t-q_t-\delta|<\eta$ for all
$t\ge T^*=T_1$, with equality to $0$ in the lattice case (iii). Only
continuity and full support of $F$ enter, through the shrink probability.
\end{proof}

After $T^*$ the faulted monitor's alarm rate returns to $\alpha$: the monitor
is re-blinded, holding a threshold exactly $\delta$ above the baseline
world's, and the books are closed at $\delta/\eta$.

\begin{theorem}[ramp excess rate]\label{thm:ramp}
Let the faulted stream be $s^f_t=\xi_t+c\,t$ with slope
$0\le c<\eta(1-\alpha)$, the noise i.i.d.\ with full-support density. Then,
almost surely,
$\tfrac1T\sum_{t=1}^{T}e_t\to\alpha+c/\eta$,
and in any stationary regime of the recentred chain the alarm probability
equals $\alpha+c/\eta$ exactly, independently of the accumulated magnitude
$c\,t$.
\end{theorem}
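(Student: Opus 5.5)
The plan is to recentre the threshold on the moving signal and read the rate off the telescoping identity of Lemma~\ref{lem:tel}. Set $r_t = q_t - c\,t$. Since $e_t=\mathbf 1\{\xi_t + ct > q_t\} = \mathbf 1\{\xi_t > r_t\}$, the recentred process obeys
\[
r_{t+1} \;=\; r_t + \eta\,(e_t-\alpha) - c \;=\; r_t + \eta\bigl(e_t - \alpha - c/\eta\bigr),
\]
with $e_t=\mathbf 1\{\xi_t>r_t\}$. This is exactly the tracker~\eqref{eq:tracker} on the stationary noise $(\xi_t)$, with effective target level $\alpha' = \alpha + c/\eta$. The hypothesis $0\le c<\eta(1-\alpha)$ is precisely $\alpha'\in[\alpha,1)$, so the effective level stays admissible. (For $c=0$ the statement is just Proposition~\ref{prop:validity}, so I take $c>0$.) Applying Lemma~\ref{lem:tel} to $r$ gives
\[
\frac1T\sum_{t=1}^T e_t - \alpha' \;=\; \frac{r_{T+1}-r_1}{\eta T}.
\]
Both claims therefore reduce to controlling $r_{T+1}/T$ almost surely, and to identifying the stationary alarm probability of the chain $r$.

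For the almost sure limit, I will show that $r$ is a positive recurrent (in particular tight) Markov chain on $\mathbb R$, so that $r_T/T\to 0$ almost surely. I plan a Foster--Lyapunov drift argument with $V(r)=|r|$. The mean increment is $\eta\bigl(1-F(r)-\alpha'\bigr)$. For $r$ large, $1-F(r)<\alpha'$ up to a margin $\epsilon>0$, so the drift is at most $-\eta\epsilon$. For $r$ very negative, $1-F(r)\to 1>\alpha'$, and the drift is at least $\eta(1-\alpha')-o(1)>0$; this is exactly where the strict inequality $c<\eta(1-\alpha)$ is used. Increments are bounded by $\max(\eta(1-\alpha'),\eta\alpha')$, so no moment conditions on $\xi$ are needed.

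The chain lives on the countable set $r_1 + \eta(1-\alpha)\mathbb Z_{\ge0} - (\eta\alpha+c)\mathbb Z_{\ge 0}$, or its reachable part. I expect the main obstacle to be irreducibility and petiteness on this possibly dense set. To handle it, I will use full support of the density: from any compact set the chain reaches a fixed neighbourhood with probability bounded below in a bounded number of steps. This gives positive Harris recurrence, either as a $T$-chain/petite-set argument or, more simply, via bounded increments and a return-time bound of the kind used in Appendix~\ref{app:ramp} for Lemma~\ref{lem:coupling}(iv). Alternatively, the drift bound alone already yields $\limsup |r_T|/T = 0$ almost surely, by a supermartingale argument for the excursions of $|r|$ outside a compact set. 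That avoids the need for full ergodic theory, and I would try it first.

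For the stationary statement, let $\pi$ be any stationary law of $r$. Taking expectations of the recursion under $\pi$, and using that $r_{t+1}$ and $r_t$ have the same law with finite first moment, gives $\mathbb E_\pi[e_t]=\alpha'=\alpha+c/\eta$. Tightness with geometric tails from the drift condition ensures that $\mathbb E_\pi|r|<\infty$. The alarm probability is determined by the law of $r_t$, which does not involve $ct$, so it is independent of the accumulated magnitude.
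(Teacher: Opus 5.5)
\textbf{Overall.} Your overall structure is the paper's proof: the recentring $r_t=q_t-ct$, the reading of $r$ as a tracker with effective level $\alpha'=\alpha+c/\eta$, the telescoping reduction to $r_{T+1}/T\to0$, and the stationary step of integrating the recursion under an arbitrary invariant law. The fallback you say you would try first is exactly the argument of Appendix~\ref{app:ramp}: bounded increments plus uniformly inward drift of $|r-p^\ast|$ outside a compact set give geometrically tailed excursions, hence $r_T/T\to0$ by Borel--Cantelli.

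\textbf{The Harris route fails.} The route you announce as the plan cannot work, and you should drop it. The chain is not $\psi$-irreducible for any nonzero $\psi$, so no petite-set or $T$-chain argument can yield Harris recurrence. Its kernel $P(x,\cdot)=\bar F(x)\,\delta_{x+\eta(1-\alpha')}+F(x)\,\delta_{x-\eta\alpha'}$ is purely atomic. (Note that the alarm increment is $\eta(1-\alpha)-c$, not $\eta(1-\alpha)$ as in your orbit description.) After $n$ steps with $m$ alarms one has $r=r_1+\eta m-\eta\alpha' n$, so every iterate $K_a(x,\cdot)$ is carried by $x+G$, where $G=\eta\mathbb Z+\eta\alpha'\mathbb Z$ is countable.

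Two points of an interval whose difference lies outside $G$ therefore have disjoint supports. The only measure minorizing $K_a(x,\cdot)$ uniformly over an interval is zero, so no compact interval is petite. If $\alpha'$ is irrational, the map $(m,n)\mapsto\eta m-\eta\alpha' n$ is injective, so each orbit point is visited at most once and there is not even recurrence at points.

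Your observation that the chain reaches a fixed neighbourhood with probability bounded below is correct. It gives recurrence to a set, not a minorization. What you need, and what the drift argument delivers, is tightness and $r_T/T\to0$, not Harris positivity. The paper flags precisely this obstruction and stays pathwise. It obtains the existence of an invariant law from tightness of occupation measures via Krylov--Bogolyubov (the kernel is Feller), not from Harris theory.

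\textbf{Two smaller points.} First, the finite first moment you invoke in the stationary step is unnecessary. The increment is bounded by $\eta\max(\alpha',1-\alpha')$, and $r_{t+1}$ and $r_t$ are equal in law under $\pi$. Clipping both at $\pm M$ (a $1$-Lipschitz map) and applying dominated convergence gives $\mathbb E_\pi[r_{t+1}-r_t]=0$ directly.

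Second, $c=0$ should not be delegated to Proposition~\ref{prop:validity}. Its bounded-threshold hypothesis does not hold almost surely for full-support i.i.d.\ noise: from the compact set the chain revisits infinitely often, arbitrarily long runs of alarms occur. What is actually needed is $q_T/T\to0$, and your drift argument already gives this verbatim with $\alpha'=\alpha\in(0,1)$, so there is no reason to exclude $c=0$.
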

\begin{proof}
Set $p_t=q_t-ct$. Then $e_t=\mathbf 1\{\xi_t>p_t\}$ and
$p_{t+1}=p_t+\eta(e_t-\alpha)-c$: the recentred chain is a tracker on the
stationary noise with constant leak $c$. Telescoping gives
$\tfrac1T\sum_t e_t=\alpha+\tfrac c\eta+\tfrac{p_{T+1}-p_1}{\eta T}$, and the
remainder vanishes almost surely because $(p_t)$ has bounded increments and
strictly inward drift outside a compact interval whenever
$c<\eta(1-\alpha)$, hence $\sup_t|p_t|<\infty$ a.s.\
(Appendix~\ref{app:ramp}). Stationarity forces
$\eta(\mathbb E[e_t]-\alpha)=c$.
\end{proof}

\begin{proposition}[saturation: the tracking condition is the alarm cap]\label{prop:sat}
The condition $c<\eta(1-\alpha)$ is necessary and sufficient, and the
identity is self-limiting: its value $\alpha+c/\eta$ equals exactly $1$ at
$c=\eta(1-\alpha)$. For $c\ge\eta(1-\alpha)$ the empirical alarm rate
converges to $1$ almost surely; at the boundary the recentred chain is
nonincreasing and drifts to $-\infty$ through waits of exploding expectation,
and above it the divergence is linear (Appendix~\ref{app:ramp}).
\end{proposition}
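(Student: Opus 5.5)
We work throughout with the recentred chain of Theorem~\ref{thm:ramp}, $p_t=q_t-ct$, which obeys
\[
p_{t+1}=p_t+\eta(e_t-\alpha)-c,\qquad e_t=\mathbf 1\{\xi_t>p_t\}.
\]
The increment takes only two values: $\eta(1-\alpha)-c$ on an alarm and $-\eta\alpha-c<0$ on a quiet step. Everything turns on the sign of the alarm increment $a:=\eta(1-\alpha)-c$. The self-limiting claim is pure algebra: $\alpha+c/\eta=1$ exactly when $c=\eta(1-\alpha)$. Sufficiency of $c<\eta(1-\alpha)$ is Theorem~\ref{thm:ramp}. So it remains to prove that the alarm rate tends to $1$ whenever $a\le 0$, which gives necessity at once, since the limit is then $1\ne\alpha+c/\eta$ for $c>\eta(1-\alpha)$, and at equality the formula breaks because the chain is no longer tight.

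\textbf{Case $a<0$ (above the boundary).} Here every increment is at most $\max(a,-\eta\alpha-c)<0$. Hence $p_t\le p_1-\kappa(t-1)$ with $\kappa=\min(|a|,\eta\alpha+c)>0$, so $p_t\to-\infty$ linearly. Then $P(e_t=0)=F(p_t)\to 0$. Summability of $F(p_1-\kappa t)$ is not available without tail assumptions, so I would not use Borel--Cantelli. Instead I would apply a strong law to the martingale differences $e_t-(1-F(p_t))$, which are bounded and hence obey the SLLN. Combined with Cesàro convergence of $1-F(p_t)\to1$, this gives $\frac1T\sum_t e_t\to1$ almost surely.

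\textbf{Case $a=0$ (the boundary).} Now alarms leave $p$ unchanged and quiet steps lower it by $\eta\alpha+c=\eta$. So $p$ is nonincreasing on the lattice $p_1-\eta\mathbb N$. While $p=x$, the wait for the next quiet step is geometric with mean $1/F(x)$. I would argue in four steps.
\begin{enumerate}[leftmargin=1.6em, itemsep=0.15em]
\item Each wait is almost surely finite, because $F(x)>0$ by full support. Hence $p_t\to-\infty$ almost surely.
\item The expected waits $1/F(p_1-k\eta)$ explode as $k\to\infty$.
\item Quiet steps become sparse. After $k$ quiet steps the elapsed time is $\sum_{j<k}W_j$, with $W_j$ independent geometrics of parameters $F(p_1-j\eta)\to0$. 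A conditional Borel--Cantelli or law-of-large-numbers comparison then gives $k/\sum_{j<k}W_j\to0$ almost surely.
\item The number of quiet steps up to time $T$ is therefore $o(T)$, so the alarm rate tends to $1$.
\end{enumerate}
Equivalently, one can reuse the martingale argument from the case $a<0$: since $F(p_t)\to0$ almost surely, the Cesàro average of $1-F(p_t)$ tends to $1$.

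The martingale route in fact handles both cases uniformly. Its only substantive input is $p_t\to-\infty$ almost surely, and that is the step I expect to need care at the boundary. Monotonicity alone gives only a limit in $[-\infty,p_1]$. To exclude a finite limit $x^*$, I would use the fact that $p$ then eventually freezes at a value $\ge x^*$. Freezing at $x$ means infinitely many consecutive steps with $\xi_t>x$, which has probability zero since $F(x)>0$. Hence a finite limit is impossible.
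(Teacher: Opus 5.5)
Your proposal is correct, and its skeleton matches the paper's. You recentre, split on the sign of the alarm increment $\eta(1-\alpha)-c$, and get deterministic linear descent above the boundary. At the boundary you use the monotone lattice chain that moves by $-\eta$ only on quiet steps, and you exclude a finite limit because freezing at a lattice value $x$ has probability zero when $F(x)>0$.

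You differ from the paper in how you pass from $p_t\to-\infty$ to the empirical alarm rate. At the boundary the paper carries out the renewal argument you only gesture at in step~3:
\begin{itemize}
\item the alarm runs between quiet steps are independent geometrics, because their parameters $F(p_1-k\eta)$ are deterministic;
\item for $k\ge K_\varepsilon$ they dominate i.i.d.\ $\mathrm{Geo}(\varepsilon)$ variables;
\item the ordinary strong law then gives $\liminf_K T_K/K\ge(1-\varepsilon)/\varepsilon$, and letting $\varepsilon\downarrow0$ yields $T_K/K\to\infty$.
\end{itemize}
Above the boundary the paper simply asserts that $\bar F(p_t)\to1$ gives a Ces\`aro rate of $1$.

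Your martingale route (bounded differences $e_t-\bar F(p_t)$, strong law, then Ces\`aro) replaces both arguments with one. It needs only $F(p_t)\to0$ almost surely and uses neither the independence of the waits nor the lattice. It also supplies the step the paper leaves tacit above the boundary. Your refusal to use Borel--Cantelli there is well founded: with Cauchy noise $\sum_t F(p_t)=\infty$, so infinitely many quiet steps do occur even for $c>\eta(1-\alpha)$.

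What the paper's route buys is the explicit renewal picture, with waits of deterministic mean $1/F(p_1-k\eta)$. That is what literally substantiates ``waits of exploding expectation''. It would also let one read off from the left tail of $F$ how fast the rate approaches $1$.

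One wording point: at $c=\eta(1-\alpha)$ the Ces\`aro limit is $1=\alpha+c/\eta$, so the rate formula does not break there. What fails at the boundary, and what necessity means there, is tightness of the recentred chain and the existence of any stationary regime. Your $p_t\to-\infty$ does establish that.
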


Theorems~\ref{thm:budget} and~\ref{thm:ramp} give the complete dichotomy for
piecewise-affine faults. A fault that ends at total height $B$ disburses total
evidence mass $B/\eta$ whatever its path (Lemma~\ref{lem:tel} end to
end); the path only decides the rate of disbursement. A fast fault
concentrates its budget and clears the noise floor; a slow fault disburses at
rate $c/\eta$ and can remain below any detection resolution indefinitely.
What that resolution actually is, the next result computes, and the answer
carries an immediate operational consequence: because the certificate's null
fluctuation obeys a $1/L$ law rather than the binomial $1/\sqrt L$, a detector
calibrated on the binomial scale, which is the scale of the Hoeffding term in
the concentration bound of \cite{gibbs2021}, overstates
the null noise by a factor growing as $\sqrt{\eta\varphi(q_0)L}$ and needs a
\emph{quadratic} rather than linear observation window in the inverse fault
speed. Corollary~\ref{cor:window} makes that comparison exact; the theorem
first establishes why the law holds.

\begin{theorem}[exact fluctuation law of the certificate]\label{thm:fluct}
\emph{(i) Pathwise structure.} For any stream, the windowed alarm mass
telescopes: $\sum_{s=t}^{t+L-1}(e_s-\alpha)=(q_{t+L}-q_t)/\eta$. Hence in
any stationary null regime the windowed alarm rate
$\bar e_{t,L}=\tfrac1L\sum_{s=t}^{t+L-1}e_s$ satisfies
\[
\operatorname{Var}(\bar e_{t,L})
=\frac{\operatorname{Var}(q_{t+L}-q_t)}{\eta^2L^2}
\;\le\;\frac{4\operatorname{Var}_\pi(q)}{\eta^2L^2}:
\]
the null fluctuation of the certificate scales as $1/L$, not the binomial
$1/\sqrt L$, because it is a difference of a tight internal state.
\emph{(ii) Constants.} For i.i.d.\ noise with density $\varphi(q_0)>0$ at the
tracked quantile, the linearized (small-$\eta$) stationary chain is AR(1)
with autocorrelation $\rho=1-\eta\varphi(q_0)$ and variance
$V_q=\eta\,\alpha(1-\alpha)/(2\varphi(q_0))+O(\eta^2)$
\cite{kushneryin2003}, giving the closed form
\[
\sigma_L\;=\;\frac{\sqrt{2V_q\,(1-\rho^{\,L})}}{\eta L}
\;\xrightarrow[L\gg1/(\eta\varphi(q_0))]{}\;
\frac{1}{L}\sqrt{\frac{\alpha(1-\alpha)}{\eta\,\varphi(q_0)}}\,.
\]
\emph{(iii) Miscalibration ratio.} A detector calibrated on the binomial
scale $\sqrt{\alpha(1-\alpha)/L}$ overstates the null fluctuation by the
factor
$\sqrt{\alpha(1-\alpha)/L}\,/\,\sigma_L\to\sqrt{\eta\,\varphi(q_0)\,L}$,
and is conservative and blind in exactly that proportion.
\end{theorem}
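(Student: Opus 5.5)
Part (i) follows directly from Lemma~\ref{lem:tel}. Applying it to the window $[t,t+L-1]$ gives $\sum_{s=t}^{t+L-1}(e_s-\alpha)=(q_{t+L}-q_t)/\eta$. Dividing by $L$, we get $\bar e_{t,L}-\alpha=(q_{t+L}-q_t)/(\eta L)$, and the variance identity is immediate. For the upper bound, I will use $\operatorname{Var}(X-Y)\le(\sqrt{\operatorname{Var}X}+\sqrt{\operatorname{Var}Y})^2$. In a stationary regime both marginals are $\pi$, so this is at most $4\operatorname{Var}_\pi(q)$. Finiteness of $\operatorname{Var}_\pi(q)$ needs tightness, that is, existence of a stationary law with a second moment. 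I would get this from the same inward-drift argument as in Appendix~\ref{app:ramp} (the case $c=0$). There the chain has bounded increments and an inward drift of order $\eta\varphi$ outside a compact set, which gives geometric tails for $\pi$.

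For part (ii), I linearize the update around the tracked quantile $q_0$. Write $q_t=q_0+x_t$, so that $e_t-\alpha=(\mathbb P(\xi>q_t)-\alpha)+\varepsilon_t$. Here $\varepsilon_t$ is a martingale difference with conditional variance $\alpha(1-\alpha)+O(x_t)$. Taylor expansion gives $\mathbb P(\xi>q_0+x)-\alpha=-\varphi(q_0)x+O(x^2)$, hence
\[
x_{t+1}=(1-\eta\varphi(q_0))\,x_t+\eta\varepsilon_t+O(\eta x_t^2).
\]
Since the stationary $x$ is $O(\sqrt\eta)$, the remainder is of higher order. Dropping it leaves an AR(1) process with $\rho=1-\eta\varphi(q_0)$ and innovation variance $\eta^2\alpha(1-\alpha)$. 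Its stationary variance is therefore $\eta^2\alpha(1-\alpha)/(1-\rho^2)$, which equals $\eta\alpha(1-\alpha)/(2\varphi(q_0))+O(\eta^2)$; this matches the cited result of \cite{kushneryin2003}. For a stationary AR(1), $\operatorname{Cov}(q_{t+L},q_t)=\rho^LV_q$, so $\operatorname{Var}(q_{t+L}-q_t)=2V_q(1-\rho^L)$. Inserting this into (i) gives the closed form for $\sigma_L$. When $L\gg1/(\eta\varphi(q_0))$ we have $\rho^L\to0$, and $2V_q/\eta^2=\alpha(1-\alpha)/(\eta\varphi(q_0))$, which gives the stated limit.

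Part (iii) is then the algebra
\[
\sqrt{\alpha(1-\alpha)/L}\,\Big/\,\Big(L^{-1}\sqrt{\alpha(1-\alpha)/(\eta\varphi(q_0))}\Big)=\sqrt{\eta\varphi(q_0)L}.
\]
Here "conservative and blind in that proportion" means the following. A threshold set at $k$ binomial standard deviations is in fact at $k\sqrt{\eta\varphi(q_0)L}$ true standard deviations. Its false-alarm rate is therefore smaller than nominal, and any mean shift must be larger by the same factor before it is detected.

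The main obstacle is the rigour of (ii). The AR(1) law is only the small-$\eta$ linearization, so the statement should be read as exact to leading order in $\eta$. Making this precise requires controlling the $O(\eta x^2)$ remainder and the state dependence of the innovation variance in stationarity. I would first try a direct second-moment computation on the exact chain, using $\mathbb E_\pi[\eta(e-\alpha)]=0$ and the expansion $\mathbb E_\pi[(q_{t+1}-q_0)^2]=\mathbb E_\pi[(q_t-q_0)^2]$. This should yield $V_q$ with an $O(\eta^{3/2})$ or $O(\eta^2)$ error, provided the density is smooth near $q_0$. If that route fails, I would fall back on the diffusion-approximation theorems of \cite{kushneryin2003}.
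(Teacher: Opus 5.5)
Your proposal is correct and follows the paper's proof essentially step for step. Part (i) comes from the telescoping lemma, with the $4\operatorname{Var}_\pi(q)/(\eta^2L^2)$ bound from Cauchy--Schwarz (your Minkowski form is the same inequality); part (ii) linearizes the chain to an AR(1) with $\rho=1-\eta\varphi(q_0)$ and lag-$L$ correlation $\rho^L$; and part (iii) is the ratio of the two scales. Your added remarks on the finiteness of $\operatorname{Var}_\pi(q)$ and on a stationary second-moment balance for $V_q$ go slightly beyond the paper, which, like you, treats the constants of (ii) as exact only to leading order in $\eta$.
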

\begin{proof}
(i) is Lemma~\ref{lem:tel} plus stationarity;
$\operatorname{Var}(q_{t+L}-q_t)=2V_q(1-\operatorname{corr}(q_t,q_{t+L}))
\le 4V_q$ by Cauchy--Schwarz, and $\le 2V_q$ whenever the lag-$L$
correlation is nonnegative, as it is in the linearized regime of (ii).
For (ii), linearize $e_t$ around the stationary point:
$q_{t+1}-q_0\approx(1-\eta\varphi(q_0))(q_t-q_0)+\eta\,m_t$ with $m_t$ a
bounded martingale difference of variance $\alpha(1-\alpha)$; the stationary
AR(1) variance is $\eta^2\alpha(1-\alpha)/(1-\rho^2)=V_q(1+O(\eta))$, its
lag-$L$ correlation is $\rho^L$, and substitution in (i) gives the display.
(iii) is the ratio of the two expressions.
\end{proof}

The constants of (ii) are the classical stationary behaviour of
constant-step stochastic approximation \cite{kushneryin2003}; what is new is
the pathwise identity (i) that ties the certificate's fluctuation to the
state, the resulting $1/L$ law for the windowed rate, and its consequence
(iii) for every detector deployed downstream of a self-calibrating monitor.
Validation with no fitted parameter: over twelve configurations
(two gains, six window lengths) the measured standard deviation matches the
closed form at $0.2$--$2.7\%$, and the measured miscalibration ratio at
$L=3200$ is $5.23$ against a predicted $5.30$
(Table~\ref{tab:fluct}, Figure~\ref{fig:fluct}).

\begin{corollary}[two detection regimes]\label{cor:window}
Consider a downstream detector flagging when a window of length $L$ exceeds
its calibrated null band by $z$ standard units, against a drift of excess
rate $c/\eta$.
\emph{(i) Correct calibration} (band $\alpha+z\,\sigma_L$): the minimal
resolvable window is
\[
L^*\;=\;\frac{z}{c}\,\sqrt{\frac{\alpha(1-\alpha)\,\eta}{\varphi(q_0)}},
\]
linear in the inverse fault speed: halving the fault speed \emph{doubles}
the observation needed.
\emph{(ii) Binomial calibration} (band
$\alpha+z\sqrt{\alpha(1-\alpha)/L}$, the scale of the Hoeffding term in
\cite{gibbs2021}, Theorem~4.1): the minimal
window is $L^*=z^2\alpha(1-\alpha)(\eta/c)^2$, quadratic in the inverse
speed: halving the speed quadruples it. The quadratic law is real but it is
the price of miscalibration, not of information.
\end{corollary}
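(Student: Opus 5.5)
The plan is to read the corollary as a signal-versus-band comparison. The signal is the mean excess of the windowed alarm rate under the ramp, which Theorem~\ref{thm:ramp} fixes. The band is the null fluctuation of the windowed rate, which Theorem~\ref{thm:fluct} fixes in each calibration.

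\emph{Signal.} Under a ramp of slope $c<\eta(1-\alpha)$, Theorem~\ref{thm:ramp} gives stationary alarm probability $\alpha+c/\eta$. By linearity of expectation, the mean of $\bar e_{t,L}$ in the stationary regime of the recentred chain is then exactly $\alpha+c/\eta$ for every $L$. Equivalently, by Lemma~\ref{lem:tel}, $L(\bar e_{t,L}-\alpha)=c L/\eta+(p_{t+L}-p_t)/\eta$. The first term is the deterministic drift contribution. The second is a difference of the tight recentred state, with the same $O(1/L)$ structure as in the null case.

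\emph{Detection criterion.} Take $L^*$ to be the smallest $L$ at which the mean excess reaches the detection threshold, that is, $c/\eta = z\cdot(\text{band scale at }L)$. This is the standard ``signal equals $z$ noise units'' resolution criterion. I will state it as the working definition of minimal resolvable window, since the corollary is phrased in terms of exceeding the calibrated band by $z$ standard units.

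\emph{Case (i): correct calibration.} Insert the large-$L$ limit of Theorem~\ref{thm:fluct}(ii), $\sigma_L\simeq L^{-1}\sqrt{\alpha(1-\alpha)/(\eta\varphi(q_0))}$. The equation $c/\eta = z\sigma_L$ gives $L=(z\eta/c)\sqrt{\alpha(1-\alpha)/(\eta\varphi(q_0))}=(z/c)\sqrt{\alpha(1-\alpha)\eta/\varphi(q_0)}$, which is the displayed formula. This is linear in $1/c$.

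\emph{Case (ii): binomial calibration.} The equation $c/\eta=z\sqrt{\alpha(1-\alpha)/L}$ solves to $L=z^2\alpha(1-\alpha)(\eta/c)^2$. This is quadratic in $1/c$.

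\emph{Ratio check.} The two windows have ratio $z\sqrt{\alpha(1-\alpha)\varphi(q_0)}\,\eta^{3/2}/c$. This is consistent with evaluating the miscalibration factor of Theorem~\ref{thm:fluct}(iii) at $L^*$, which serves as a sanity check.

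\emph{Main obstacle: self-consistency of case (i).} The closed form in case (i) relies on $\sigma_L$ being in its asymptotic regime $L\gg 1/(\eta\varphi(q_0))$. I therefore have to check that the resulting $L^*$ actually lies in that regime. The condition $L^*\eta\varphi(q_0)\gg1$ reduces to $c\ll z\,\eta^{3/2}\sqrt{\alpha(1-\alpha)\varphi(q_0)}$, which holds in the slow-fault regime the corollary addresses. For faster faults I would fall back on the exact finite-$L$ expression $\sigma_L=\sqrt{2V_q(1-\rho^L)}/(\eta L)$. In that case $L^*$ is only bounded by the displayed value, and the linear law is an asymptotic statement in $1/c$.

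A second, minor point is that the null fluctuation of $\bar e_{t,L}$ under the ramp must match the null $\sigma_L$. I would argue this from the recentred-chain representation: $p_t$ is a tracker with constant leak $c$. To first order in small $c$ its stationary law is the null law shifted, so $\operatorname{Var}(p_{t+L}-p_t)$ agrees with the null value up to $O(c)$ corrections. I would flag this as the first-order approximation underlying the corollary.
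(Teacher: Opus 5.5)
Your proposal is correct and matches the paper's derivation. The paper gives no separate proof of this corollary; it is obtained, as you do, by equating the ramp's excess rate $c/\eta$ from Theorem~\ref{thm:ramp} with $z$ times the band scale, then solving for $L$. Case (i) uses the large-$L$ limit of $\sigma_L$ from Theorem~\ref{thm:fluct}(ii), and case (ii) uses the binomial scale.

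Your extra checks are also correct and go beyond what the paper spells out:
\begin{itemize}
\item Case (i) is self-consistent exactly when $c\ll z\,\eta^{3/2}\sqrt{\alpha(1-\alpha)\varphi(q_0)}$. Equivalently, the binomial window of case (ii) must be much longer than the correct one.
\item Since $1-\rho^L\le 1$ and $\sigma_L$ is decreasing in $L$, the finite-$L$ form of $\sigma_L$ only turns the displayed $L^*$ into an upper bound.
\end{itemize}
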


\begin{remark}[assumption load]
Proposition~\ref{prop:validity} is pathwise and assumption-free;
Theorem~\ref{thm:confusion} below requires only equality of the noise law
across worlds; the i.i.d.\ and full-support assumptions carry the exact
budgets through the coupling; the fluctuation constants of
Theorem~\ref{thm:fluct}(ii) alone are linearized, and their measured accuracy
is reported next to every use. The strongest statements are the least loaded.
\end{remark}

\begin{remark}[scope]\label{rem:scope}
The identities are exact for the linear-gain tracker~\eqref{eq:tracker}.
Sliding-window quantile variants and the level-update form of adaptive
conformal inference obey them to first order; the experiments use
\eqref{eq:tracker} and measure agreement at the stated precision.
\end{remark}

\section{The manufactured blind set}\label{sec:blind}

The identities above bound what one specific monitor family extracts. The
next result bounds what any procedure can extract, once a tolerance
obligation is imposed.

Fix a horizon $n$ and the observation model
$y_t=\mu_t+\theta_t+\xi_t$, $t=1,\dots,n$, with $(\xi_t)$ i.i.d.\ noise with
full-support density, nuisance drift $\mu\in\mathcal D$, and fault trajectory
$\theta$. The tolerance obligation is the requirement that a decision rule
$\phi$ (any measurable function of the observations, adaptive or not)
satisfies the uniform size constraint
$\sup_{\mu\in\mathcal D}\Pr_{\mu,\theta=0}(\phi=1)\le\alpha$.

\begin{theorem}[confusion]\label{thm:confusion}
Let $\theta\in\mathcal D-\mathcal D$, that is $\theta=\mu'-\mu''$ with
$\mu',\mu''\in\mathcal D$. Then every rule $\phi$ satisfying the uniform size
constraint obeys $\Pr_{\mu'',\theta}(\phi=1)\le\alpha$. Power is at most
size: the fault is invisible to any detector, at any horizon.
\end{theorem}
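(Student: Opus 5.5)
The argument is a two-point identification: the faulted world with nuisance $\mu''$ and fault $\theta=\mu'-\mu''$ is, as a probability law on the observation vector, literally the same as an unfaulted world with nuisance $\mu'$. The size constraint already controls the rule's behaviour in the latter world, so it transfers to the former.

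\textbf{Step 1: pathwise identity of the observations.} Under $(\mu'',\theta)$ the observations are
\[
y_t=\mu''_t+\theta_t+\xi_t=\mu''_t+(\mu'_t-\mu''_t)+\xi_t=\mu'_t+\xi_t ,\qquad t=1,\dots,n .
\]
This is exactly the observation model under $(\mu',0)$, driven by the same noise sequence. Because the noise law is the same in both worlds, the joint law of $(y_1,\dots,y_n)$ is identical under $\Pr_{\mu'',\theta}$ and $\Pr_{\mu',0}$. This is the push-forward of the common noise law by the same deterministic shift map.

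\textbf{Step 2: transfer through the rule.} Let $\phi$ be any measurable function of the observations. If $\phi$ uses auxiliary randomization, that randomization is independent of the world, so the argument is unchanged. Then $\phi$ has the same distribution under both laws. Hence
\[
\Pr_{\mu'',\theta}(\phi=1)=\Pr_{\mu',0}(\phi=1)\le\sup_{\mu\in\mathcal D}\Pr_{\mu,0}(\phi=1)\le\alpha ,
\]
where the first inequality uses $\mu'\in\mathcal D$. Since $n$ and $\phi$ were arbitrary, the bound holds at every horizon and for every decision rule. Adaptivity, including sequential stopping, is covered because the entire observation process coincides in law.

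\textbf{Expected obstacle.} There is no real analytic difficulty. The one point requiring care is making precise what counts as a ``decision rule,'' so that Step 2 applies. Concretely, $\phi$ must depend on the data only through $(y_t)$, together with possibly world-independent external randomness. It must not depend on the unobservable decomposition into $\mu$ and $\theta$.

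If one wants the statement for $n=\infty$ or for stopping times, I would argue on the path space $\mathbb R^{\mathbb N}$. There the two induced measures coincide on the product $\sigma$-algebra, being equal on cylinder sets by Step 1. Full support of the noise density is not needed here; only equality of the noise law across worlds is used, consistent with the assumption-load remark.
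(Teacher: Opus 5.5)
Your proof is correct and is essentially the paper's own two-point identification: under $(\mu'',\theta)$ the observations reduce to $\mu'_t+\xi_t$, which has the law of the null instance $(\mu',0)$ with $\mu'\in\mathcal D$, and the uniform size constraint then bounds the power by $\alpha$. Your added remarks on world-independent external randomization, stopping times, and extension to the path space are sound refinements of points the paper leaves implicit.
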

\begin{proof}
Under $(\mu'',\theta)$ the observations are
$y_t=\mu''_t+\theta_t+\xi_t=\mu'_t+\xi_t$, whose law is that of the null
instance $(\mu',0)$ with $\mu'\in\mathcal D$. The size constraint applied to
this instance gives the bound.
\end{proof}

The proof is a two-point identification argument in the sense of Le Cam,
elementary by design \cite{tsybakov2009}; we state this plainly. The
contribution is not the technique but the object it isolates: the difference
set $\mathcal D-\mathcal D$ as the exact blind set induced by a tolerance
obligation, its closed-form characterization for speed classes below, and
the fact, established in Proposition~\ref{prop:manufacture}, that the
tracker absorbs a speed class fixed by its own gain, so that under the
certification reading of Corollary~\ref{cor:certif} it manufactures
$\mathcal D$. The classical
geometric theory of undetectable faults, those lying in the
disturbance-decoupled subspace \cite{massoumnia1989,chenpatton1999}, is the
linear, offline, model-fixed instance of the same mechanism; here the class
is nonparametric, the rule arbitrary and adaptive, and the set is created by
the monitor's design rather than by the plant.

The constraint bears on size only. Anytime-valid procedures whose type~I
error is controlled uniformly over the null class, such as tests built from
e-processes or conformal test martingales \cite{vovk2005,ramdas2023},
satisfy the hypothesis at every stopping time and are covered verbatim: no
betting scheme recovers what the tolerance obligation removes.

\begin{corollary}[speed frontier]\label{cor:speed}
Let $\mathcal D_v=\{\mu:\mu_1=0,\ |\mu_{t+1}-\mu_t|\le v\}$. Then every fault
with $\theta_1=0$ and $|\theta_{t+1}-\theta_t|\le 2v$ lies in
$\mathcal D_v-\mathcal D_v$ and is invisible to any monitor with uniform
size over $\mathcal D_v$, whatever its final magnitude. Conversely the
difference of two $v$-speed-bounded drifts is $2v$-speed-bounded, so
$\mathcal D_v-\mathcal D_v$ is exactly the $2v$-speed class: the blind set is
characterized, not merely contained.
\end{corollary}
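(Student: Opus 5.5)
The plan is to prove the two inclusions separately and then invoke Theorem~\ref{thm:confusion} for the invisibility claim. For the forward inclusion, take a fault $\theta$ with $\theta_1=0$ and $|\theta_{t+1}-\theta_t|\le 2v$, and write it explicitly as $\theta=\mu'-\mu''$ with $\mu',\mu''\in\mathcal D_v$. The natural choice is the symmetric split $\mu'=\theta/2$, $\mu''=-\theta/2$. Both start at $0$ because $\theta_1=0$. Each has increments $\pm(\theta_{t+1}-\theta_t)/2$, which are bounded in absolute value by $v$. Hence both lie in $\mathcal D_v$, and $\mu'-\mu''=\theta$. Theorem~\ref{thm:confusion}, applied with this pair, then gives $\Pr_{\mu'',\theta}(\phi=1)\le\alpha$ for every rule $\phi$ with uniform size over $\mathcal D_v$.

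Two remarks address ``whatever its final magnitude''. First, $\theta_n$ may be as large as $2v(n-1)$. Second, the construction places no bound on $\theta$ itself, only on its increments, so the bound holds for every horizon $n$. If one wants invisibility for every nuisance baseline and not just the specific $\mu''=-\theta/2$, the statement as written does not require it. I would note that Theorem~\ref{thm:confusion} gives the bound at the particular nuisance $\mu''$, and that this is what ``invisible'' means here.

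For the converse inclusion, take $\mu',\mu''\in\mathcal D_v$ and set $\theta=\mu'-\mu''$. Then $\theta_1=0-0=0$, and the triangle inequality gives
\[
|\theta_{t+1}-\theta_t|\le|\mu'_{t+1}-\mu'_t|+|\mu''_{t+1}-\mu''_t|\le 2v.
\]
So $\mathcal D_v-\mathcal D_v$ is contained in the $2v$-speed class started at $0$. Combined with the first paragraph, this gives equality.

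There is essentially no obstacle here; both directions are one line each. The only point needing care is the forward direction: the decomposition must keep each summand in $\mathcal D_v$, including the anchoring $\mu_1=0$. A naive choice such as $\mu'=\theta$, $\mu''=0$ would fail, since $\theta$ has speed up to $2v$ and so $\mu'$ need not lie in $\mathcal D_v$. The symmetric halving resolves this. It is also where the factor $2$ in the frontier comes from, and the triangle-inequality step shows that this factor cannot be improved.
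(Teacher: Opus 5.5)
Your proof is correct and matches the paper's: the paper also writes $\theta=(\theta/2)-(-\theta/2)$ and checks that both halves start at zero and are $v$-speed-bounded, using the symmetry of $\mathcal D_v$; it treats the converse as the triangle-inequality step you wrote out. Your remark that the invisibility is certified at the particular nuisance $\mu''=-\theta/2$ is the correct reading of Theorem~\ref{thm:confusion}.
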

\begin{proof}
Write $\theta=(\theta/2)-(-\theta/2)$; both halves are $v$-speed-bounded,
start at zero, and $\mathcal D_v$ is symmetric.
\end{proof}

What decides invisibility is the speed of the fault relative to the tolerated
speed, not its size: the frontier sits at exactly twice the tolerance. We
claim no achievability beyond the set: the projection bound below caps the
power of every rule outside it, and whether some rule attains nontrivial
power arbitrarily close to the frontier is left open. Two frontiers must not
be conflated: the information-theoretic frontier at $2v$, which is exact and
detector-free (this corollary), and the operational frontier of any
\emph{specific} downstream detector, which is the resolution crossing of
Corollary~\ref{cor:window} and has a finite width; the experiments exhibit
both, separately.

\begin{proposition}[exact projection bound]\label{prop:lecam}
Let the noise be Gaussian, $\xi_t\sim\mathcal N(0,\sigma^2)$, and let
$\Pi_{\mathcal D}$ denote Euclidean projection onto $\mathcal D$ over the
horizon. For every rule with uniform size $\alpha$ over $\mathcal D$ and
every $(\mu,\theta)$ with $\mu\in\mathcal D$, writing
$\Delta=(\mu+\theta)-\Pi_{\mathcal D}(\mu+\theta)$,
\[
\Pr_{\mu,\theta}(\phi=1)\;\le\;\alpha\;+\;
2\,\Phi\!\Big(\frac{\|\Delta\|_2}{2\sigma}\Big)-1 .
\]
\end{proposition}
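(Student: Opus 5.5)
Under $(\mu,\theta)$ the observation vector is $y\sim\mathcal N(\mu+\theta,\sigma^2 I_n)$. Set $m=\mu+\theta$ and let $\mu^\ast=\Pi_{\mathcal D}(m)\in\mathcal D$; this requires $\mathcal D$ to be closed, or the projection to be attained, which holds for the speed classes of Corollary~\ref{cor:speed}. For a non-attained projection I would take a near-minimizer and pass to the limit. The plan compares the faulted instance with the null instance $(\mu^\ast,0)$, which is admissible under the uniform size constraint exactly as in the proof of Theorem~\ref{thm:confusion}. For any measurable $\phi$,
\[
\Pr_{\mu,\theta}(\phi=1)\;\le\;\Pr_{\mu^\ast,0}(\phi=1)+\mathrm{TV}\big(\mathcal N(m,\sigma^2 I),\mathcal N(\mu^\ast,\sigma^2 I)\big)
\;\le\;\alpha+\mathrm{TV}.
\]
This reduces the claim to computing one total variation distance exactly. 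This is the Le Cam two-point step, now with a nonzero separation instead of the exact identification of Theorem~\ref{thm:confusion}.

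The second step evaluates that distance in closed form rather than bounding it by Pinsker. Two isotropic Gaussians with a common covariance differ only along the direction $\Delta=m-\mu^\ast$. The likelihood ratio depends on $y$ only through the scalar projection $\langle y,\Delta\rangle/\|\Delta\|_2$, and the orthogonal components have identical laws under both hypotheses, so the total variation reduces to that of $\mathcal N(0,\sigma^2)$ versus $\mathcal N(\|\Delta\|_2,\sigma^2)$ on the line. For two unit-variance-scaled normals with mean gap $d$, the optimal set is a half-line at the midpoint, giving $\mathrm{TV}=\Phi(d/2\sigma)-\Phi(-d/2\sigma)=2\Phi(d/2\sigma)-1$. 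Substituting $d=\|\Delta\|_2$ gives the display.

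As sanity checks I would record two facts. First, $\Delta=0$ recovers Theorem~\ref{thm:confusion}, since $\theta\in\mathcal D-\mathcal D$ lets one choose a $\mu$ with $\mu+\theta\in\mathcal D$. Second, the bound never exceeds $\alpha+1$, and it is never vacuous in the sense that $2\Phi(\cdot)-1<1$ for every finite $\|\Delta\|_2$, in contrast with $\sqrt{\mathrm{KL}/2}=\|\Delta\|_2/(2\sigma)$ from Pinsker, which exceeds $1$ once $\|\Delta\|_2>2\sigma$. Because $\Phi(x)-\tfrac12\le x/\sqrt{2\pi}$, the exact bound is also uniformly sharper than Pinsker.

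The main obstacle is not the Gaussian computation, which is routine, but the choice of comparison point. The bound is stated with the Euclidean projection, yet any $\mu'\in\mathcal D$ would give the same inequality with $\|m-\mu'\|_2$ in place of $\|\Delta\|_2$. The projection is simply the minimizer of the right-hand side, because $\Phi$ is increasing. So I only need existence of a minimizer, or an infimum argument, and no convexity of $\mathcal D$. Uniqueness of the projection, which would need convexity, is irrelevant to the argument; $\mathcal D_v$ happens to be convex and closed in any case, so the statement is clean for the speed classes.
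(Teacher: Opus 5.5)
Your proposal is correct and follows the paper's proof: you compare against the null instance $\mu^\ast=\Pi_{\mathcal D}(\mu+\theta)$, bound the power gap by the total variation between $\mathcal N(\mu+\theta,\sigma^2 I_n)$ and $\mathcal N(\mu^\ast,\sigma^2 I_n)$, and evaluate that distance exactly as $2\Phi(\|\Delta\|_2/(2\sigma))-1$ by reducing to the one-dimensional direction of $\Delta$. Your added points are correct refinements that the paper leaves implicit: the argument needs only that the distance to $\mathcal D$ be attained (or an infimum argument), not convexity or uniqueness of the projection, and any $\mu'\in\mathcal D$ gives the bound with $\|\mu+\theta-\mu'\|_2$.
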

\begin{proof}
Choose the null instance $\mu^\ast=\Pi_{\mathcal D}(\mu+\theta)$. Power
exceeds size at $\mu^\ast$ by at most the total variation distance between
the two observation laws, which are Gaussian with equal covariance
$\sigma^2 I_n$ and mean gap $\Delta$; for such a pair the total variation is
exactly $2\Phi(\|\Delta\|_2/(2\sigma))-1$, by reduction to the
one-dimensional likelihood-ratio direction.
\end{proof}

\begin{remark}[on Pinsker]
The Pinsker route, $\mathrm{TV}\le\sqrt{\mathrm{KL}/2}$ with
$\mathrm{KL}=\|\Delta\|_2^2/(2\sigma^2)$, gives the linear bound
$\|\Delta\|_2/(2\sigma)$: weaker than the exact bound by the factor
$\sqrt{2/\pi}\approx0.80$ in the near-class regime, and vacuous for
$\|\Delta\|_2>2\sigma$, where it exceeds $1$. The exact bound saturates at
$1$, as any upper bound on power must once detection becomes easy, and is
informative at every distance.
\end{remark}

The distance of the perturbed trajectory to the tolerated class is the whole
story: the blind set is its zero level set, and power decays continuously as
the fault approaches the class.

\begin{proposition}[the monitor manufactures its own class]\label{prop:manufacture}
Say a drift $\nu$ is $\varepsilon$-absorbed by a monitor if, almost surely,
the long-run alarm rate under $(\nu,\theta=0)$ deviates from $\alpha$ by at
most $\varepsilon$. For every $\varepsilon<\min(\alpha,1-\alpha)$, the
tracker~\eqref{eq:tracker} $\varepsilon$-absorbs \emph{every} drift of the
speed class $\mathcal D_{v_{\mathrm{eff}}}$ with
$v_{\mathrm{eff}}=\varepsilon\,\eta$: when
$\sup_t|\nu_{t+1}-\nu_t|\le\varepsilon\eta$, the recentred chain
$q_t-\nu_t$ keeps bounded increments and uniformly inward drift, so it is
tight and the long-run alarm rate lies in
$[\alpha-\varepsilon,\alpha+\varepsilon]$; constant slopes give the exact
value $\alpha+c/\eta$ of Theorem~\ref{thm:ramp}
(Appendix~\ref{app:ramp}). The tolerated
class is indexed by the monitor's own gain, not by any physical uncertainty
model.
\end{proposition}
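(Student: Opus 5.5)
The approach is to follow the recentring device of Theorem~\ref{thm:ramp}, replacing the constant leak $c$ by a time-varying leak $d_t=\nu_{t+1}-\nu_t$ with $|d_t|\le\varepsilon\eta$. Set $p_t=q_t-\nu_t$. Under $(\nu,\theta=0)$ we have $e_t=\mathbf 1\{\xi_t>p_t\}$ and
\[
p_{t+1}=p_t+\eta(e_t-\alpha)-d_t .
\]
Telescoping as in Lemma~\ref{lem:tel} then gives the pathwise identity
\[
\frac1T\sum_{t=1}^T e_t-\alpha=\frac{1}{\eta T}\sum_{t=1}^T d_t+\frac{p_{T+1}-p_1}{\eta T}.
\]
The first term has modulus at most $\varepsilon$ for every $T$, because $|d_t|\le\varepsilon\eta$. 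The whole claim therefore reduces to showing that $(p_t)/T\to0$ almost surely, and this follows from $\sup_t|p_t|<\infty$ almost surely. For constant $d_t\equiv c$ the same identity returns the exact value $\alpha+c/\eta$, which recovers Theorem~\ref{thm:ramp}.

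Tightness comes from uniform inward drift of the recentred chain. Its increments are bounded by $\eta+\varepsilon\eta$. Its conditional drift is $\eta(1-F(p_t)-\alpha)-d_t$. Since $\varepsilon<\min(\alpha,1-\alpha)$, we can fix $\gamma>0$ with $\varepsilon+\gamma<\min(\alpha,1-\alpha)$. Full support of $F$ then yields a compact interval $[a,b]$ on which the drift bounds hold. For $p_t>b$ we have $1-F(p_t)<\gamma$, so the drift is at most $\eta(\gamma-\alpha+\varepsilon)<0$. For $p_t<a$ we have $1-F(p_t)>1-\gamma$, so the drift is at least $\eta(1-\gamma-\alpha-\varepsilon)>0$. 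These bounds are uniform in $t$, and this uniformity is exactly what the speed bound $v_{\mathrm{eff}}=\varepsilon\eta$ buys.

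The main obstacle is converting ``bounded increments plus uniformly inward drift outside a compact set'' into an almost sure bound $\sup_t|p_t|<\infty$. Two features rule out an off-the-shelf Markov-chain argument: the chain is time-inhomogeneous, because $d_t$ is deterministic but arbitrary, and the excursions from $[a,b]$ need control. I would not use positive recurrence. Instead I would use a Hajek-type exponential supermartingale bound. Take $\lambda>0$ small enough that $\mathbb E[e^{\lambda(p_{t+1}-p_t)}\mid\mathcal F_t]\le\rho<1$ whenever $p_t>b$. This is possible by a second-order expansion using the bounded increments and the drift gap $\eta(\alpha-\varepsilon-\gamma)$. The bound gives $\sup_t\mathbb E[e^{\lambda p_t}]<\infty$, with the symmetric statement below $a$. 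That bound alone only yields $p_t=O(\log t)$ almost surely, via Borel--Cantelli. This is still sufficient here, since the identity only requires $p_{T+1}/T\to0$; the appendix should make this explicit rather than assert $\sup_t|p_t|<\infty$.

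If a genuine almost sure bound is wanted, one more step is needed. Once $p$ exceeds $b+\eta$, it behaves like a walk with negative drift, so each excursion's overshoot has a geometric tail uniform in the start time. One can then argue that $\limsup_t p_t$ is finite. For the long-run rate statement, however, the $O(\log t)$ bound closes the argument.

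The final claim, that the tolerated class is indexed by the gain, is then immediate: $v_{\mathrm{eff}}$ depends only on $(\varepsilon,\eta)$ and not on $F$ or on the drift.
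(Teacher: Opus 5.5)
Your argument is correct. It shares the paper's reduction: the recentring $p_t=q_t-\nu_t$, the telescoped identity whose drift term has modulus at most $\varepsilon$, and uniform inward drift from $|d_t|\le\varepsilon\eta<\eta\min(\alpha,1-\alpha)$. You close it with a different use of Hajek's lemma, however.

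The appendix applies Hajek to return times. Excursions away from a compact set have uniformly geometric tails, Borel--Cantelli gives $G_j/j\to0$ for the $j$-th excursion length, and inside the $j$-th excursion $|p_t|$ is at most a constant plus $(\eta+v)G_j$. Hence $p_T/T\to0$.

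You apply it to the position itself, obtaining $\sup_t\mathbb E[e^{\lambda|p_t|}]<\infty$, and then use Borel--Cantelli on $\{|p_t|>(2/\lambda)\log t\}$. Your version buys several things:
\begin{itemize}
\item It is shorter.
\item The time-inhomogeneous case is immediate, since the exponential supermartingale condition concerns an adapted process and nothing Markovian is used. The paper instead has to assert that Hajek's lemma is insensitive to inhomogeneity.
\item It gives the explicit rate $O(\log T/T)$ for the remainder.
\item It yields tightness of the marginal laws uniformly in $t$, which is the most literal reading of ``tight'' in the statement.
\end{itemize}
The paper's excursion formulation earns its keep elsewhere. It is reused for the renewal--reward bound on the coupling time in Lemma~\ref{lem:coupling}(iv), and for the tightness of occupation measures behind the invariant-law claim of Theorem~\ref{thm:ramp}.

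Drop, however, your closing aside that one more step gives $\limsup_t p_t<\infty$; that is false.
\begin{itemize}
\item Under full support, every step alarms with positive probability.
\item Each alarm raises $p$ by at least $\eta(1-\alpha-\varepsilon)>0$.
\item So from the compact set the chain reaches any prescribed level $H$ within a bounded number of steps, with probability bounded below uniformly in time.
\item The chain returns to the compact set infinitely often, so the conditional Borel--Cantelli lemma gives $\limsup_t p_t=+\infty$ almost surely. Symmetrically, $\liminf_t p_t=-\infty$.
\end{itemize}
Uniform geometric overshoot tails make the running maximum grow like the logarithm of the number of excursions; they do not bound it. Your distrust of ``$\sup_t|p_t|<\infty$ a.s.'' is therefore justified. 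The main-text proof of Theorem~\ref{thm:ramp} asserts that bound, but the appendix in fact proves only $p_T/T\to0$. Your logarithmic bound is the correct statement, not a weaker stand-in for a bounded one.
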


\begin{corollary}[certification reading]\label{cor:certif}
Suppose a certification regime declares normal every drift that the deployed
monitor absorbs at its operating resolution, and requires any downstream
decision rule to keep uniform size over the declared normal class. Then every
fault with speed at most $2\varepsilon\eta$ is invisible to every compliant
rule, whatever its final magnitude. The blind set is manufactured by the
monitor and inherited by the whole certification chain.
\end{corollary}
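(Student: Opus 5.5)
The plan is to chain Proposition~\ref{prop:manufacture} into Corollary~\ref{cor:speed} through Theorem~\ref{thm:confusion}. The certification regime fixes the declared normal class $\mathcal N$ as the set of drifts the deployed tracker $\varepsilon$-absorbs at its operating resolution $\varepsilon$, with $\varepsilon<\min(\alpha,1-\alpha)$ so that Proposition~\ref{prop:manufacture} applies. That proposition gives $\mathcal D_{v_{\mathrm{eff}}}\subseteq\mathcal N$ with $v_{\mathrm{eff}}=\varepsilon\eta$. Hence a compliant rule $\phi$, which has uniform size $\alpha$ over $\mathcal N$, has in particular uniform size $\alpha$ over the smaller class $\mathcal D_{\varepsilon\eta}$. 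This holds because a supremum over a subclass is at most the supremum over the whole class.

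Once $\phi$ satisfies the size constraint for $\mathcal D=\mathcal D_{\varepsilon\eta}$, I take a fault $\theta$ with $\theta_1=0$ and $|\theta_{t+1}-\theta_t|\le 2\varepsilon\eta$. I decompose it exactly as in Corollary~\ref{cor:speed}, writing $\theta=\theta/2-(-\theta/2)$, where both halves lie in $\mathcal D_{\varepsilon\eta}$. Theorem~\ref{thm:confusion}, applied with $\mu''=-\theta/2$ and $\mu'=\theta/2$, gives $\Pr_{-\theta/2,\theta}(\phi=1)\le\alpha$.

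To make the statement cover faults superposed on an arbitrary tolerated nuisance, I will also note the following. For any $\mu''\in\mathcal D_{\varepsilon\eta}$ such that $\mu''+\theta$ is itself in $\mathcal N$, the observation law coincides with a null instance, so the same bound holds. The clean statement I will actually prove is the existence of a nuisance in the normal class under which power is at most size, which is the sense of ``invisible'' in Theorem~\ref{thm:confusion}. No bound on the final magnitude $\theta_n$ enters anywhere, since the horizon $n$ is arbitrary and Corollary~\ref{cor:speed} is magnitude-free. That gives the ``whatever its final magnitude'' clause.

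The main obstacle is a modelling mismatch between the two objects. Proposition~\ref{prop:manufacture} is phrased over infinite trajectories with almost-sure long-run rates, while Theorem~\ref{thm:confusion} works at a fixed horizon $n$ with finite-length drift vectors. I will resolve this by identifying each finite-horizon element of $\mathcal D_{v}$ with its extension that is constant after time $n$. Such an extension remains $v$-speed-bounded, so it is absorbed by Proposition~\ref{prop:manufacture}. This makes the inclusion $\mathcal D_{\varepsilon\eta}\subseteq\mathcal N$ meaningful at every horizon.

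A second subtlety is the phrase ``at its operating resolution''. I will read it as fixing $\varepsilon$, with the requirement $\varepsilon<\min(\alpha,1-\alpha)$ inherited from Proposition~\ref{prop:manufacture}. Everything else in the argument is inclusion of classes and monotonicity of the size constraint.
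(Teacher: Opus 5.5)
Your proposal is correct and follows the paper's own (unwritten, immediate) argument. Proposition~\ref{prop:manufacture} places $\mathcal D_{\varepsilon\eta}$ inside the declared normal class, so compliance implies uniform size over $\mathcal D_{\varepsilon\eta}$, and then Corollary~\ref{cor:speed} via Theorem~\ref{thm:confusion} makes every $2\varepsilon\eta$-speed fault invisible. Your constant extension past the horizon $n$ and the explicit requirement $\varepsilon<\min(\alpha,1-\alpha)$ make precise two points the paper leaves implicit.
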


\begin{remark}[no free tuning]
Raising $\eta$ shrinks nothing for free: it widens the tolerated class, hence
the blind set, in exact proportion to how fast it re-absorbs steps. The
operating resolution is now exact rather than heuristic: at correct
calibration $\varepsilon(L)=z\sqrt{\alpha(1-\alpha)/(\eta\varphi(q_0))}/L$,
at binomial calibration $\varepsilon(L)=z\sqrt{\alpha(1-\alpha)/L}$
(Theorem~\ref{thm:fluct}), so slowing adaptation narrows the blind set only
at the price of longer miscalibration after legitimate changes. The dial
trades the evidence budget against the width of the blind set; no setting
removes both costs.
\end{remark}

\begin{remark}[worst case versus average case]
Theorem~\ref{thm:confusion} is a worst-case statement over the nuisance
class: the alternative world $\mu'$ must merely be admissible. If a prior
over $\mathcal D$ concentrates away from $\mu'$, average-case detection may
recover; the geometry then enters through the distance of
Proposition~\ref{prop:lecam} weighted by the prior. The worst-case reading
is the relevant one for certification, where guarantees must hold for every
admissible drift.
\end{remark}

\section{Numerical verification}\label{sec:exp}

All experiments use the tracker~\eqref{eq:tracker} with $\alpha=0.1$,
$\eta=0.05$, standard Gaussian noise, paired noise realizations between
baseline and faulted worlds, and fixed seeds per repetition. Every number is
an evaluation of recursion~\eqref{eq:tracker} itself, the object under
study; each prediction below was computed before the corresponding
measurement, and none contains a fitted parameter. Code reproducing every
table and figure is available from the author.

\textbf{Identities, with per-trajectory exactness.} Table~\ref{tab:ident}
reports the step budgets over $50$ repetitions with $95\%$ confidence
intervals, together with the maximal deviation from $\delta/\eta$ over all
individual trajectories. All listed $\delta$ are lattice multiples of
$\eta=0.05$, and the measured maximal per-trajectory deviation is $0.000$:
the identity holds \emph{on every path}, as Theorem~\ref{thm:budget}
predicts, not merely on average. For the non-lattice step $\delta=0.53$
(predicted budget $10.6$), the measured excess over $50$ trajectories has
mean $10.52$, $95\%$ interval half-width $0.14$, and per-trajectory range
$[10.00,\,11.00]$: the sub-alarm oscillation of
Lemma~\ref{lem:coupling}(ii), confined to the predicted unit window. Ramp
rates over $20$ repetitions match $\alpha+c/\eta$ with confidence intervals
below $10^{-4}$.

\begin{table}[h]
\centering\small
\begin{tabular}{lcccccc}
\toprule
step $\delta$ & 0.5 & 1.0 & 1.5 & 2.0 & 2.5 & 3.0\\
predicted $\delta/\eta$ & 10 & 20 & 30 & 40 & 50 & 60\\
measured mean (CI$_{95}$) & 10.00 (0.00) & 20.00 (0.00) & 30.00 (0.00) &
40.00 (0.00) & 50.00 (0.00) & 60.00 (0.00)\\
max per-path $|$dev$|$ & 0.000 & 0.000 & 0.000 & 0.000 & 0.000 & 0.000\\
\midrule
ramp slope $c$ ($\times10^{-3}$) & 0.5 & 1 & 2 & 4 & 8 & \\
predicted $\alpha+c/\eta$ & 0.110 & 0.120 & 0.140 & 0.180 & 0.260 & \\
measured rate & 0.1100 & 0.1200 & 0.1400 & 0.1800 & 0.2600 & \\
\bottomrule
\end{tabular}
\caption{Exact evidence identities. Steps: 50 repetitions, horizon 4000;
lattice steps are exact on every trajectory. Ramps: 20 repetitions,
horizon 40000 after burn-in; all CI$_{95}$ below $10^{-4}$.}
\label{tab:ident}
\end{table}

\textbf{Saturation.} Sweeping the slope through the boundary
$\eta(1-\alpha)=0.045$ (four repetitions, horizon $10^5$): measured rates
$0.5000$, $0.7000$, $0.9000$, $0.9900$, $0.9999$, $1.0000$ for
$c=0.02$, $0.03$, $0.04$, $0.0445$, $0.045$, $0.05$, against
$\min(\alpha+c/\eta,1)$. The identity runs exactly into its own cap at the
boundary, as Proposition~\ref{prop:sat} states.

\textbf{Fluctuation law.} Table~\ref{tab:fluct} compares the closed form of
Theorem~\ref{thm:fluct} with the measured standard deviation of the windowed
null alarm rate, across two gains and six window lengths (six repetitions of
horizon $3\times10^5$ each). Mean error $1.0\%$, maximum $2.7\%$, no fitted
parameter; the stationary threshold deviation itself matches
($\eta=0.05$: predicted $0.1132$, measured $0.1140$; $\eta=0.02$: $0.0716$,
$0.0715$), and the miscalibration ratio at $L=3200$ is measured at $5.23$
and $3.26$ against predicted $\sqrt{\eta\varphi(q_0)L}=5.30$ and $3.35$.

\begin{table}[h]
\centering\small
\begin{tabular}{r|ccc|ccc}
\toprule
& \multicolumn{3}{c|}{$\eta=0.05$} & \multicolumn{3}{c}{$\eta=0.02$}\\
$L$ & predicted & measured & binom./true & predicted & measured & binom./true\\
\midrule
100 & 2.451e-2 & 2.478e-2 & 1.21 & 2.757e-2 & 2.771e-2 & 1.08\\
200 & 1.457e-2 & 1.460e-2 & 1.45 & 1.799e-2 & 1.792e-2 & 1.18\\
400 & 7.888e-3 & 7.942e-3 & 1.89 & 1.100e-2 & 1.102e-2 & 1.36\\
800 & 4.001e-3 & 3.977e-3 & 2.67 & 6.137e-3 & 6.237e-3 & 1.70\\
1600 & 2.002e-3 & 1.966e-3 & 3.82 & 3.159e-3 & 3.185e-3 & 2.35\\
3200 & 1.001e-3 & 1.014e-3 & 5.23 & 1.582e-3 & 1.626e-3 & 3.26\\
\bottomrule
\end{tabular}
\caption{Null standard deviation of the windowed alarm rate: closed form
versus measurement, and the measured binomial miscalibration ratio.}
\label{tab:fluct}
\end{table}

\begin{figure}[t]\centering
\includegraphics[width=0.62\textwidth]{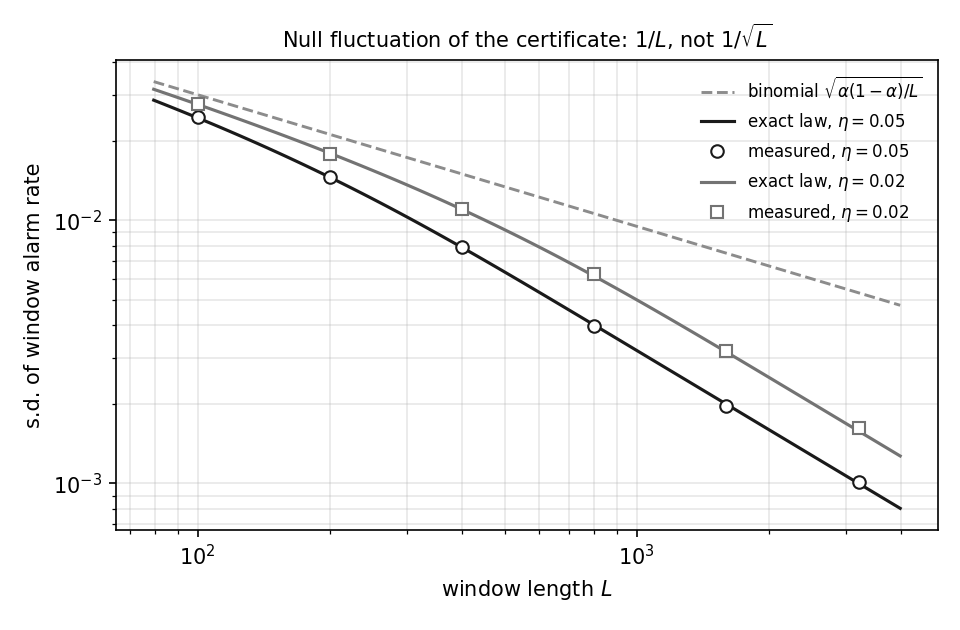}
\caption{The certificate's null fluctuation follows the $1/L$ law of
Theorem~\ref{thm:fluct} (solid, no fitted parameter), not the binomial
$1/\sqrt L$ (dashed); measured points for two gains.}
\label{fig:fluct}
\end{figure}

\textbf{The two calibrations, head to head.} At $L=800$, the exact null
scale is $\sigma_{800}=4.00\times10^{-3}$ against a binomial
$1.06\times10^{-2}$, ratio $2.65$, equal to the predicted
$\sqrt{\eta\varphi(q_0)L}=2.65$. Two detectors were run on identical
streams (40 repetitions, horizon $6\times10^4$): the binomial-calibrated
band at $z=3$ (threshold $0.1318$) and the exactly calibrated band at $z=5$
(threshold $0.1200$), the latter chosen so that its measured false-flag rate
per horizon over sliding windows equals the former's ($0.00$, Wilson
$[0,0.09]$, for both). At matched null behaviour, power against sustained
drifts of excess rate $c/\eta\in\{0.012,0.016,0.020,0.032\}$ is
$1.00/1.00/1.00/1.00$ for the exact calibration against
$0.00/0.12/0.97/1.00$ for the binomial one: correct calibration resolves
drifts $2$--$3$ times slower at identical false-alarm behaviour, the factor
predicted by Theorem~\ref{thm:fluct}(iii).

\textbf{Illustrative trace.} Figure~\ref{fig:trace} shows a step fault of size
$4\sigma$ producing its evidence burst (peak windowed rate $0.26$) and then
re-blinding the monitor, next to a ramp fault reaching $8\sigma$, twice the
size, whose windowed alarm rate never leaves the neighbourhood of $\alpha$
(maximum $0.14$ over the run) while the threshold rides the fault. The
certificate remains clean because the fault is invisible, not despite it.
This is the simulated counterpart of the collapse-before-awareness mode
reported for world-model-based monitors of reinforcement learning agents
under gradual drift \cite{gradual}, discussed in Section~\ref{sec:related}:
the same absorption, here with the threshold trajectory exposed.

\begin{figure}[h]
\centering
\includegraphics[width=0.8\textwidth]{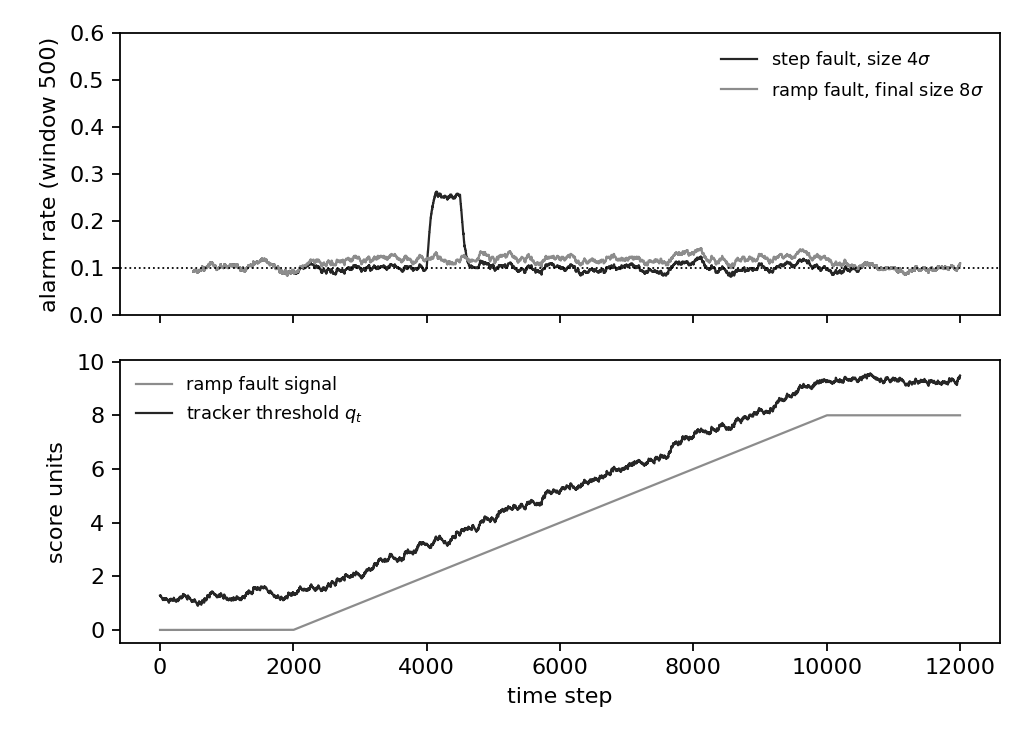}
\caption{Top: windowed alarm rate for a $4\sigma$ step and an $8\sigma$
ramp; the dotted line is the target $\alpha$. Bottom: the tracker threshold
riding the ramp fault.}
\label{fig:trace}
\end{figure}

\textbf{Speed frontier, with intervals.} Table~\ref{tab:front} refines the
power map of Figure~\ref{fig:map} along the speed axis at two fault sizes
(40 repetitions per cell, Wilson $95\%$ intervals; ramp to size $B$ then
hold; binomial-calibrated downstream detector, $L=400$, $z=3$, resolution
$0.045$, measured null flag rate $0.00$ on matched fault-free horizons). The
transition is a resolution crossing completed within
$c/\eta\in[0.016,0.035]$, not a discretization artifact, and it is vertical
to first order: columns decide. Within the transition band, size matters
through duration: at $c/\eta=0.025$ the $16\sigma$ fault is caught in $0.97$
of runs against $0.40$ for the $4\sigma$ fault, because the longer ramp
offers the detector more windows, the repeated-attempt mechanism. Outside
the band the rows are flat: at $0.016$ even $16\sigma$ faults are mostly
missed; at $0.035$ and above even $4\sigma$ faults are always caught.

\begin{table}[h]
\centering\small
\begin{tabular}{lcc}
\toprule
$c/\eta$ & power, $B=4\sigma$ & power, $B=16\sigma$\\
\midrule
0.016 & 0.03 [0.00, 0.13] & 0.12 [0.05, 0.26]\\
0.025 & 0.40 [0.26, 0.55] & 0.97 [0.87, 1.00]\\
0.035 & 1.00 [0.91, 1.00] & 1.00 [0.91, 1.00]\\
0.045 & 1.00 [0.91, 1.00] & 1.00 [0.91, 1.00]\\
0.055--0.100 & 1.00 [0.91, 1.00] & 1.00 [0.91, 1.00]\\
\bottomrule
\end{tabular}
\caption{Refined speed frontier with Wilson intervals, 40 repetitions per
cell; same protocol as Figure~\ref{fig:map}.}
\label{tab:front}
\end{table}

\begin{figure}[h]
\centering
\includegraphics[width=0.62\textwidth]{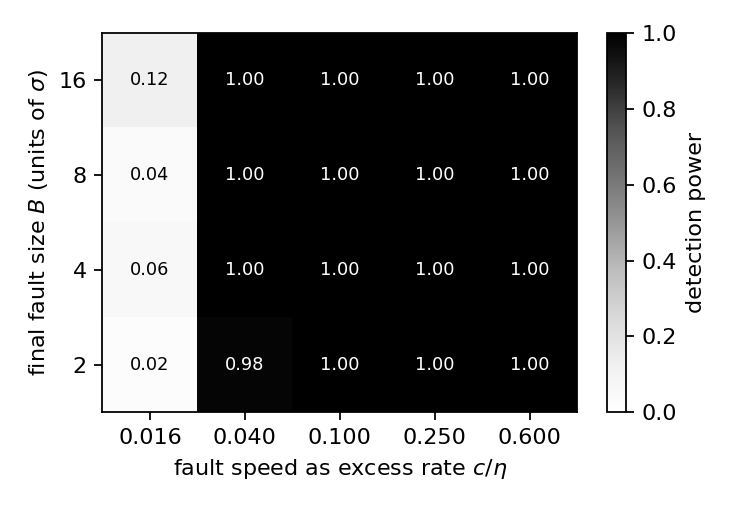}
\caption{Detection power versus fault speed (as excess rate $c/\eta$) and
final fault size. Protocol: $2000$ fault-free steps, a ramp of slope $c$ to
height $B$ (duration $B/c$), then $3000$ steps at height $B$; the same
binomial-calibrated downstream detector as Table~\ref{tab:front} ($L=400$,
$z=3$); $200$ repetitions per cell. The frontier is vertical outside its
resolution band: speed decides, size does not. In the leftmost column the
ramp is long and power grows weakly with size through the number of windows
it offers, the repeated-attempt mechanism of the text; the three lowest
cells there are indistinguishable at this repetition count.}
\label{fig:map}
\end{figure}

\section{Related work}\label{sec:related}

\textbf{Adaptive conformal inference.} The false-alarm guarantee of the
tracker family is the deterministic telescoping bound of \cite{gibbs2021},
which already gives the $1/T$ scale of the long-run deviation; see
\cite{angelopoulos2023} for background. What we add is not the scale but the
exact stochastic law behind it: the pathwise budget $\delta/\eta$ and its
lattice exactness, the ramp rate $c/\eta$ with its self-limiting saturation,
and the closed-form null fluctuation of the windowed rate with its
miscalibration consequence for every downstream detector. That the alarm
indicators are increments of a tight state, so that their windowed sums have
bounded variance, is a spectral-density-at-zero statement familiar in time
series. \cite{gibbs2021} note (Section~4.2.2, Appendix~A.7) that the negative
dependence induced by the update makes the errors concentrate no slower than
an i.i.d.\ Bernoulli sequence, and their Theorem~4.1 accordingly carries a
Hoeffding term on the binomial scale. Theorem~\ref{thm:fluct} shows the null
scale is in fact $1/L$, a factor $\sqrt L$ smaller: the binomial reading is a
valid upper bound and a loose one, by exactly $\sqrt{\eta\varphi(q_0)L}$, and
we have not found that rate drawn out for self-calibrating monitors, nor its
calibration cost quantified. The control-theoretic reading of the
update is made explicit by conformal PID control \cite{angelopoulos2023pid},
which augments the integral recursion with proportional and derivative
terms to improve tracking; the budget and fluctuation identities here
concern the integral core, and the blind-set results bind such variants
unchanged, since Theorem~\ref{thm:confusion} constrains every rule.
Recent work documents power failure modes of
weighted conformal detection under shift \cite{confpower}; the mechanism
there is internal to the weighting scheme, whereas the losses quantified here
are imposed on any monitor by the tolerance obligation itself.

\textbf{Stochastic approximation.} The tracker is Robbins--Monro
\cite{robbinsmonro1951} quantile
estimation at constant step; the stationary variance and AR(1) structure
used in Theorem~\ref{thm:fluct}(ii) are classical for constant-step
stochastic approximation \cite{kushneryin2003}. We have not found in that
literature the pathwise budget $\delta/\eta$, its lattice exactness, the
self-limiting saturation, or the $1/L$ certificate fluctuation law with its
miscalibration consequence; the identity perspective, which derives all of
them from one telescoping lemma, appears to be the new ingredient.

\textbf{Anytime-valid inference.} Game-theoretic and e-value based methods
construct tests whose validity holds uniformly over composite nulls and at
arbitrary stopping times \cite{vovk2005,shafervovk2019,ramdas2023}. These
guarantees concern size; Theorem~\ref{thm:confusion} shows that the same
uniformity, imposed over a drift class, caps power on the difference set for
this family as for any other. The two viewpoints are complementary: anytime
validity disciplines the certificate, the present results price it. Within
this family, contamination of the calibration set by post-change data has
been identified as a cause of diluted evidence and delayed detection, and
removed by testing against a fixed null reference \cite{shaer2026}. That
design presupposes that every shift is anomalous; it is unavailable
precisely when some drifts are legitimate and must be tolerated, which is
the regime studied here.

\textbf{Stealthy attacks in secure control.} The security literature has its
own theory of invisible inputs: attacks exciting only the zero dynamics of a
known plant evade any detector fed by the residual
\cite{pasqualetti2013,teixeira2015}. The mechanism is again an
indistinguishability set, but its origin differs in every structural
respect: there the set is fixed by the plant model, computed offline, and
exploited by an adversary who knows the dynamics; here the set is
indexed by the monitor's own adaptation scale, exists for any score source
without a model, and caps every decision rule, adversarial exploitation
being possible but not required. The two mechanisms compose rather than
compete: an adaptive monitor deployed on a plant with nontrivial zero
dynamics inherits both blind sets.

\textbf{Sequential change detection.} Slowly drifting pre-change regimes are
a known difficulty for classical procedures
\cite{page1954,lorden1971,basseville1993}. That literature typically fixes a
known pre-change law; the composite-null formulation with a drift class, and
the exact difference-set characterization of invisibility, address the
setting where the null itself must be tolerated as a set.

\textbf{Quantitative diagnosability.} The distinguishability framework of
\cite{eriksson2013} measures, in linear Gaussian descriptor models, a
Kullback--Leibler distance between a fault and a fault mode, and proves it
upper-bounds the fault-to-noise ratio of any linear residual generator. Its
fault-free mode is a singleton, so set structure appears only between fault
modes; residual generators there are designed offline and nothing
recalibrates. Our nominal is a set precisely because the monitor chooses to
tolerate, the set is indexed by the monitor's adaptation scale
(Proposition~\ref{prop:manufacture}), and the certificate stays clean during
the collapse. The classical geometric account of undetectable faults
\cite{massoumnia1989,chenpatton1999} is the linear, offline instance of the
difference-set mechanism, fixed by the plant model rather than manufactured
by an adaptive rule.

\textbf{Contraction-based detectability.} Recent work bounds the
distributional separation between nominal and faulty trajectory densities
via stochastic contraction \cite{ibrahim2026}. Those bounds depend on
fault magnitude, assume a known nominal, and analyze batch likelihood
detectors; nothing adapts, so nothing absorbs. The two lines are
complementary: separation bounds say when behaviours differ; the present
results say when an adaptive monitor is forbidden from using the difference.

\textbf{Empirical gradual-drift failures.} Hong \cite{gradual} studies
world-model-based self-monitoring of reinforcement learning agents under
continuous observation drift and reports a sharp absorption threshold whose
existence and sigmoid shape are invariant across three detector families and
model capacities, together with a collapse-before-awareness mode in fragile
environments, described there as fundamentally unmonitorable. These
observations are what identity-level mechanics predict: the existence of the
threshold and its invariance across detector families follow from the budget
and rate identities rather than from any particular detector, and the
unmonitorable mode instantiates the manufactured blind set of
Theorem~\ref{thm:confusion}; its absorption boundary is located exactly by
Theorem~\ref{thm:ramp} at $c=\eta(1-\alpha)$ and displayed in
Figure~\ref{fig:trace}, where a clean certificate accompanies a fault of
arbitrary accumulated size. The reported dependence of the threshold's
position on environment dynamics lives in how the system shapes the score
stream, outside the scope of the present monitor-level analysis.

\section{Discussion}\label{sec:discussion}

\textbf{What a clean certificate certifies.} The empirical alarm rate of a
self-calibrating monitor certifies that its bookkeeping closes. It does not
certify vigilance, and Proposition~\ref{prop:validity} shows it cannot: the
identity holds whatever passes through. Reading such certificates as
evidence of safety amounts to reading an accountant's balanced ledger as
proof that nothing was stolen. The exact identities make the gap
quantitative: a bounded, size-independent budget for steps, exact to within one
alarm in general and exact on every trajectory on the lattice; a rate, not
an amount, for drifts,
valid precisely up to the alarm cap; a certificate whose own null
fluctuation is $1/L$, so that the usual binomial reading of it is
miscalibrated by $\sqrt{\eta\varphi(q_0)L}$; and a blind set, manufactured
by the monitor's tolerance, where the evidence is exactly zero for every
detector.

\textbf{Design consequences.} Three are unconditional. First, the gain
$\eta$ is not a free dial: it indexes the tolerated class and hence the
blind set, in the exact proportion of
Proposition~\ref{prop:manufacture}, and no setting removes both costs.
Second, downstream detectors should be calibrated on the exact fluctuation
law, not the binomial scale: at identical measured false-alarm behaviour
this resolves drifts two to three times slower in our regime, and the gap
grows as $\sqrt L$. Third, certification of adaptive monitors should report,
next to the false-alarm level, the effective tolerated class, the implied
blind set, and the calibration convention of every detector reading the
alarm stream, all computable from design parameters.

\textbf{Limitations.} The identities are exact for the linear-gain tracker
on scalar scores; windowed-quantile and level-update variants obey them to
first order, and multivariate scores are not treated. The fluctuation
constants are linearized in $\eta$ and verified at the percent level; the
pathwise $1/L$ structure itself is exact. The confusion theorem is
worst-case over the drift class and requires the tolerance obligation to be
uniform; average-case detectability under concentrated priors is not
excluded and is quantified by the projection bound. The operational frontier
of any fixed detector has a finite transition band, within which fault
duration matters through repeated windows; the information-theoretic
frontier at twice the tolerated speed is exact.

\textbf{Outlook.} The layer studied here is deliberately abstract: scores
arrive from an unspecified source. In estimation pipelines the score is the
residual of a state observer, and the residual floor is then governed by
system-level quantities such as the observer's convergence margin, which
shape both the effective drift class and the geometry of the blind set. The
composition of the present results with certified learned observers, through
to embedded deployment, is the subject of ongoing work and will be reported
separately, as is the pricing of intermittent ground truth within the same
economy. On the fundamental side,
the conjecture we leave open is a full law of adaptive surveillance: a
characterization, for general monitor families and tolerance classes, of the
exact trade-off surface linking the false-alarm guarantee, the evidence
budget, the certificate's fluctuation scale, and the measure of the
manufactured blind set, together with monitors achieving it.

\appendix
\section{Deferred arguments}\label{app:ramp}

\textbf{Boundedness of the recentred ramp chain.} Let
$p_{t+1}=p_t+\eta(\mathbf 1\{\xi_t>p_t\}-\alpha)-c$ with
$0\le c<\eta(1-\alpha)$ and $\xi_t$ i.i.d.\ with continuous CDF $F$ of full
support. Increments are bounded by $\eta+c$. The conditional mean drift is
$d(p)=\eta(\bar F(p)-\alpha)-c$, continuous and strictly decreasing, with
$d(p)\to\eta(1-\alpha)-c>0$ as $p\to-\infty$ and $d(p)\to-\eta\alpha-c<0$ as
$p\to+\infty$; let $p^\ast$ solve $\bar F(p^\ast)=\alpha+c/\eta$. For any
$\varepsilon>0$ there is $\rho>0$ with $d(p)\le-\rho$ for
$p\ge p^\ast+\varepsilon$ and $d(p)\ge\rho$ for $p\le p^\ast-\varepsilon$.
The function $V(p)=|p-p^\ast|$ has bounded increments and uniformly
negative drift outside a compact interval, and two remarks keep what
follows elementary and free of any irreducibility assumption; the latter
would in fact be \emph{false} in the usual Lebesgue sense, since the chain
has two-valued increments and evolves on a countable orbit (already for
$c=0$ and rational $\alpha=p/q$ in lowest terms it is a countable-state
lattice chain of period $q$, so no aperiodicity is available either; the
pathwise route below is not a stylistic choice).

\emph{(i) Geometric excursions and $p_T/T\to0$.} Bounded increments
together with drift $\le-\rho$ (resp.\ $\ge\rho$) outside $K$ place $V$
under Hajek's drift lemma \cite{hajek1982}: return times to $K$ have
uniformly geometric tails from bounded starting sets, and
$\mathbb E_x[\tau_K]\le V(x)/\rho$ by the basic Foster bound. Writing $G_j$
for the successive excursion lengths away from $K$, geometric tails give
$\sum_j\Pr(G_j>\varepsilon j)<\infty$ for every $\varepsilon>0$, so
$G_j/j\to0$ a.s.\ by Borel--Cantelli; since $|p_t|\le\max_K|\cdot|+
(\eta+c)\,G_j$ throughout the $j$-th excursion, $p_T/T\to0$ almost surely,
and the Ces\`aro limit $\alpha+c/\eta$ follows from the telescoped identity
alone. (See \cite{meyntweedie2009} for the general drift theory.)

\emph{(ii) Every invariant law closes the books.} The occupation measures
$\tfrac1T\sum_{t\le T}\delta_{p_t}$ are tight by (i), so an invariant
probability $\pi$ exists by Krylov--Bogolyubov. No uniqueness is needed for
the exactness claim: integrating the recursion under \emph{any} invariant
$\pi$ gives $0=\mathbb E_\pi[p_{t+1}-p_t]=\eta(\mathbb E_\pi[e_t]-\alpha)-c$,
hence $\mathbb E_\pi[e_t]=\alpha+c/\eta$ for every invariant law.

\textbf{Time-varying leak.} The argument above uses only that the drift of
$V$ is uniformly negative outside a compact set, so it applies verbatim to
the time-inhomogeneous chain $p_t=q_t-\nu_t$ with
$p_{t+1}=p_t+\eta(\mathbf 1\{\xi_t>p_t\}-\alpha)-(\nu_{t+1}-\nu_t)$ whenever
$\sup_t|\nu_{t+1}-\nu_t|\le v<\min(\eta\alpha,\eta(1-\alpha))$: the
conditional drift satisfies
$d_t(p)\le\eta(\bar F(p)-\alpha)+v\le-(\eta\alpha-v)<0$ for $p$ large and
$d_t(p)\ge\eta(1-\alpha)-v>0$ for $p$ small, uniformly in $t$, and
increments stay bounded by $\eta+v$. Hajek's lemma is insensitive to time
inhomogeneity under uniform drift bounds, so $p_T/T\to0$ a.s.\ as before,
and telescoping gives
$\tfrac1T\sum_t e_t=\alpha+\nu_T/(\eta T)+o(1)$ a.s.; since
$|\nu_T|\le vT$, every accumulation point of the Ces\`aro average lies in
$[\alpha-v/\eta,\,\alpha+v/\eta]$. With $v=\varepsilon\eta$ and
$\varepsilon<\min(\alpha,1-\alpha)$ this is the absorption claim of
Proposition~\ref{prop:manufacture}; a constant slope makes the limit exact,
recovering Theorem~\ref{thm:ramp}.

\textbf{Saturation at and above the boundary.} At $c=\eta(1-\alpha)$ the
increments of $p$ are $0$ on alarm steps and $-\eta$ on quiet steps, so $p$
is nonincreasing and moves only at quiet steps. Consequently the value of
$p$ after its $k$-th quiet step is the deterministic quantity $p_0-k\eta$,
independent of when that step occurred; write $q_k=F(p_0-k\eta)$ for the
quiet probability once $k$ quiet steps have occurred. First, $p_t\to-\infty$
a.s.: if instead $p_t\downarrow p_\infty>-\infty$, then eventually
$p_t\le p_\infty+1$, where the quiet probability is bounded below by
$F(p_\infty)>0$ by full support, so infinitely many quiet steps occur a.s.\
by the second Borel--Cantelli lemma, contradicting convergence to a finite
limit; hence $p_t\to-\infty$ and $q_k\to0$.

Second, and this is the step the informal wait-time argument skips: since
$q_k\to0$, the number of alarm steps between the $k$-th and $(k{+}1)$-th
quiet step, $N_k$, is Geometric$(q_k)$, and the $q_k$ are a
\emph{deterministic} sequence (not random, since $p$ after $k$ quiet steps
is exactly $p_0-k\eta$), so the $N_k$ are \emph{independent} across $k$, not
merely conditionally so. Fix $\varepsilon>0$ and $K_\varepsilon$ with
$q_k\le\varepsilon$ for $k\ge K_\varepsilon$; for such $k$, $N_k$
stochastically dominates an independent $\mathrm{Geo}(\varepsilon)$ variable
$\widehat N_k$, and by coupling,
$T_K=\sum_{k<K}(N_k+1)\ge\sum_{K_\varepsilon\le k<K}\widehat N_k$. The
ordinary strong law applied to the i.i.d.\ sequence $(\widehat N_k)$ gives
$\sum_{K_\varepsilon\le k<K}\widehat N_k/K\to(1-\varepsilon)/\varepsilon$
a.s., so $\liminf_K T_K/K\ge(1-\varepsilon)/\varepsilon$ a.s.; letting
$\varepsilon\downarrow0$ gives $T_K/K\to\infty$ a.s., i.e.\ $Q_T/T\to0$ and
the alarm frequency converges to $1$. For $c>\eta(1-\alpha)$ the drift
satisfies $d(p)\le\eta(1-\alpha)-c<0$ everywhere, so $p_t\to-\infty$
linearly and $\bar F(p_t)\to1$; the Ces\`aro alarm rate converges to $1$ in
both cases.

\textbf{Joint recurrence and the coupling time.} We bound $\mathbb E[T_1]$
by a structural reduction followed by renewal--reward. By
Lemma~\ref{lem:coupling}(ii) the gap never exceeds
$G:=\max(g_{t_0},\eta)$, so $v_t\in[u_t,u_t+G]$ for all $t$: the pair lives
in a strip of width $G$ around the diagonal, and it belongs to
$K_G\times K_G$, $K_G:=\{x:\operatorname{dist}(x,K)\le G\}$, whenever the
single coordinate $u_t$ belongs to $K$. Joint recurrence therefore reduces
to marginal recurrence, and no drift condition on the pair is needed. None
is available in the naive form: the sum
$V(u,v)=|u-p^\ast|+|v-p^\ast|$ can have \emph{positive} one-step drift on
the unbounded region where one coordinate idles near $p^\ast$, whose local
contribution is $+2\eta\alpha(1-\alpha)$ at the centre, while the distant
coordinate contributes only $-\eta\min(\alpha,1-\alpha)$; it is the strip
structure, not a joint Lyapunov function, that makes the pair recurrent.
The marginal chain $u$ is the $c=0$ chain above, with bounded increments
and uniformly inward drift outside $K$, so by Hajek's lemma
\cite{hajek1982} its return times to $K$ have uniformly geometric tails,
and $\kappa_1$, the supremum over states reachable in one step from $K$ of
the expected return time to $K$, is finite.

Once $u_t\in K$ and $g_t\ge\eta$, the shrink probability at that step is
$\Pr(\xi_t\in(u_t,v_t])\ge F(u_t+\eta)-F(u_t)\ge
\beta_K:=\inf_{u\in K}\{F(u+\eta)-F(u)\}>0$, a property of the noise law
alone (continuity and full support). A failed attempt costs at most one
step plus the wait until $u$ next visits $K$, so renewal--reward gives an
expected wait per shrink of at most $\kappa_2:=(1+\kappa_1)/\beta_K$. At
most $\lceil g_{t_0}/\eta\rceil$ shrinks are needed, and in the coupling of
Theorem~\ref{thm:budget} the trackers start a distance $\delta$ apart in the
compact set of the statement, so the initial approach to $K$ costs
$\mathbb E\le V(u_{t_0})/\rho=O(g_{t_0})$ by the Foster bound; hence
$\mathbb E[T_1]\le C\lceil g_{t_0}/\eta\rceil$ with $C$ depending only on
the noise law, $\eta$ and $\alpha$.

\textbf{Sharpness of the unit slack.} The gap $g_t=q^f_t-q_t-\delta$
evolves on the lattice $-\delta+\eta\mathbb Z$
(Lemma~\ref{lem:coupling}(i), applied to $u=q^f-\delta$ and $q$, which start
equal). If $\delta\in\eta\mathbb Z$ the lattice contains $0$ and the
coupling is exact from $T_1$ on: the slack is $0$, on every trajectory, as
measured in Table~\ref{tab:ident}. If $\delta\notin\eta\mathbb Z$ the
lattice excludes $0$ and $g_t$ oscillates forever between the two lattice
points bracketing $0$, so the excess mass oscillates within a unit window
around $\delta/\eta$ and the slack of one alarm is attained: it cannot be
improved to $o(1)$ in general, and the measured range $[10.00,11.00]$ for
$\delta=0.53$ exhibits it.

\textbf{Linearized stationary variance.} Writing
$q_{t+1}-q_0=(q_t-q_0)-\eta\varphi(q_0)(q_t-q_0)+\eta m_t+O((q_t-q_0)^2)$
with $m_t=e_t-\mathbb E[e_t\mid q_t]$ a bounded martingale difference of
conditional variance $\alpha(1-\alpha)+O(\eta)$, the linear part is AR(1)
with parameter $\rho=1-\eta\varphi(q_0)$, stationary variance
$\eta^2\alpha(1-\alpha)/(1-\rho^2)=\eta\alpha(1-\alpha)/(2\varphi(q_0))
\cdot(1+O(\eta))$, and lag-$L$ correlation $\rho^L$
\cite{kushneryin2003}.

\end{document}